\newcommand{\algacro}{\textbf{GETA}}
\newcommand{\ppsg}{PPSG}
\newcommand{\optname}{QASSO}
\newcommand{\qadnn}{QADNN}
\newcommand{\resnettwenty}{ResNet20}
\newcommand{\resnetfifty}{ResNet50}
\newcommand{\vggseven}{VGG7}
\newcommand{\bert}{Bert}
\newcommand{\cifarten}{CIFAR10}
\newcommand{\imagenet}{ImageNet}
\newcommand{\squad}{SQuAD}
\newcommand{\cmark}{\ding{51}}%
\newcommand{\xmark}{\ding{55}}%
\newtheorem{proposition}{Proposition}[section]
\crefname{section}{Sec.}{Secs.}
\Crefname{section}{Section}{Sections}
\Crefname{table}{Table}{Tables}
\crefname{table}{Tab.}{Tabs.}
\begin{document}

\title{
Automatic Joint Structured Pruning and Quantization for Efficient Neural Network Training and Compression
}

\author{\small
Xiaoyi Qu$^2$, David Aponte$^1$, Colby Banbury$^1$, Daniel P. Robinson$^2$, Tianyu Ding$^1$, Kazuhito Koishida$^1$, Ilya Zharkov$^1$, Tianyi Chen$^{1}$\footnote{Corresponding Author.}\\
\small Microsoft$^1$, Lehigh University$^2$\\
{\tt\small \href{mailto:xiq322@lehigh.edu}{xiq322@lehigh.edu}, \href{mailto:Tianyi.Chen@microsoft.comu}{Tianyi.Chen@microsoft.com}}
}

\maketitle

\begin{abstract}
Structured pruning and quantization are fundamental techniques used to reduce the size of deep neural networks (DNNs), and typically are applied independently.
Applying these techniques jointly via co-optimization has the potential to produce smaller, high quality models. However, existing joint schemes are not widely used because of (1) engineering difficulties (complicated multi-stage processes), 
(2) black-box optimization (extensive hyperparameter tuning to control the overall compression), and (3) insufficient architecture generalization. To address these limitations, we present the framework \algacro{}, which automatically and efficiently performs joint structured pruning and quantization-aware training on any DNNs. \algacro{} introduces three key innovations: (i) a quantization-aware dependency graph (QADG)  that constructs a pruning search space for generic quantization-aware DNN, (ii) 
a partially projected stochastic gradient method that guarantees layerwise bit constraints are satisfied, and (iii) a new joint learning strategy that incorporates interpretable relationships between pruning and quantization. 
We present numerical experiments on both convolutional neural networks  and transformer architectures that show that our approach achieves competitive (often superior) performance compared to existing joint pruning and quantization methods. The source code is available at \url{https://github.com/microsoft/geta}.

\end{abstract}

\section{Introduction}
\label{sec:intro}
Deep neural networks (DNNs) have been widely used in varying applications~\cite{krizhevsky2012imagenet,he2016deep,vaswani2017attention,kenton2019bert}. However, their increasing size has raised several concerns. 
One major challenge is the substantial storage space required to hold these models, which can be impractical for everyday devices such as standard PCs and even more so for resource-constrained edge devices~\cite{shuvo2022efficient}. Furthermore, as model sizes increase, inference cost often lengthens, leading to delays that can be frustrating for users who expect quick responses. 
Therefore, it is of practical interest and importance to compress the model while maintaining performance similar to the full model.  To address the above concerns, various model compression techniques have been studied in recent years~\cite{deng2020model}. 

\begin{figure*}[ht]
\centering
\includegraphics[width=\linewidth]{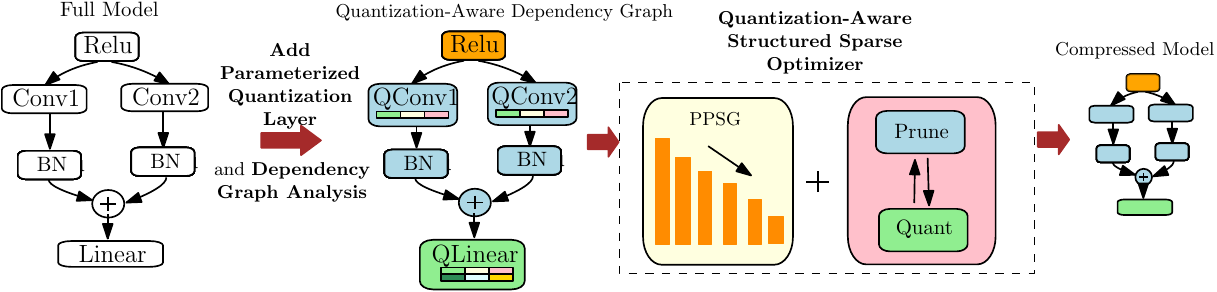}
\caption{\algacro{} framework pipeline. Nodes Conv1 and Conv2 represent two convolutional layers, node BN represents batch normalization, and the ``+" represents summation. For details on the remainder of the figures, see~\cref{sec:quantization}--\cref{sec:algorithm-description}.}
\label{fig:overview.diagram}    
\end{figure*}

Pruning and quantization are two fundamental techniques that are widely deployed, each following different methodologies. Structured pruning is perhaps the most popular pruning scheme, which aims to remove redundant structures in DNNs while preserving performance~\cite{otov1,fang2023depgraph}. Quantization reduces the bit width of the data flowing through a DNN~\cite{deng2020model}. In practice, structured pruning is typically applied first to identify a high-performing subnetwork, which is then quantized to further reduce its size and enhance its processing speed on specified hardware~\cite{han2015deep,louizos2017bayesian}. However, treating pruning and quantization separately has limitations. For example, more heavily structurally pruned models are typically more sensitive to quantization and thus require higher bit widths. Thus, joint structured pruning and quantization becomes an important topic.

\subsection{Challenges}

Many studies~\cite{han2015deep,louizos2017bayesian,yang2020automatic, Clip-q,QST,li2024markov,DJPQ,van2020bayesian, balaskas2024hardware, opq,admm} have combined pruning and quantization to obtain high compression ratios. However, these joint methods are not commonly used in practice due to one or more of the following reasons: engineering difficulty, black-box optimization, and insufficient architecture generalization, which we now discuss.

\smallskip
\noindent
\textbf{Engineering Difficulties.} First, many joint pruning and quantization methods follow a two-stage process. For example, \cite{yang2020automatic, Clip-q, DJPQ, van2020bayesian} first determine the configurations (pruning ratio and bit width) for each layer of the network, and then train the pruned and quantized model. They require separate compression and retraining stages since the two stages may be incompatible with each other. Thus, two-stage pipelines increase the execution time, especially for large datasets (\eg, ImageNet). For these reasons, a one-shot (all-in-once) framework is preferred. Second, while recent automated structured pruning frameworks propose dependency graphs to support generic architectures~\cite{otov2,fang2023depgraph}, integrating quantization introduces new challenges. The addition of attached and inserted branches in the trace graph, which are not accounted for in existing dependency graph analysis, breaks the supports for any architecture.

\smallskip
\noindent
\textbf{Black-box Optimization Process.} 
A significant portion of existing methods lacks explicit control over sparsity and bit width during optimization process. This limitation arises from the multi-objective nature of joint compression problems, which require balancing conflicting goals: maintaining model performance while maximizing sparsity and minimizing bit width. Approaches such as DJPQ~\cite{DJPQ}, BB~\cite{van2020bayesian}, and Clip-Q~\cite{Clip-q} often tackle this challenge by introducing regularization coefficients to reconcile conflicting objectives.
However, a significant drawback is their inability to predict the final compression ratio of the model prior to executing the entire optimization process. Consequently, users typically require extensive hyper-parameter tuning efforts, 
limiting flexibility and productivity in practice.


\smallskip
\noindent
\textbf{Insufficient Architecture Generalization}. The existing work~\cite{yang2020automatic, Clip-q,QST,li2024markov,DJPQ,van2020bayesian, balaskas2024hardware} on joint structured pruning and quantization primarily targets convolutional neural network (CNN), and cannot be applied to architectures such as transformers. 
For instance, both DJPQ~\cite{DJPQ} and BB~\cite{van2020bayesian} applies per-channel pruning to each layer, which will not work for multi-head attention in transformers because it ignores the dependencies between different attention heads.

\begin{table}[h]
\centering
\caption{\algacro{} versus representative joint pruning and quantization methods in terms of \textit{(i)} whether the method supports structured pruning, \textit{(ii)} whether it is a one-shot approach, \textit{(iii)} whether it is a white-box approach, and \textit{(iv)} whether it can be used on a variety of network architectures and tasks (\ie, generalization). Methods not listed lack one or more of these properties.}
\label{tab:method.comparison}
\resizebox{0.47\textwidth}{!}{
\begin{tabular}{l|c|c|c|c|c|c}
\Xhline{2pt}
& \textbf{GETA} & BB & DJPQ & QST & Clip-Q & ANNC \\
\Xhline{0.5pt}
\textbf{Structured Prune}{$^\dagger$} & \cellcolor{yellow!20}{\textcolor{Green}{\cmark}} & \cellcolor{yellow!60}{\textcolor{Green}{\cmark}} & \cellcolor{yellow!60}{\textcolor{Green}{\cmark}} & \textcolor{red}{\xmark} & \textcolor{red}{\xmark} & \textcolor{red}{\xmark} \\
\textbf{One-shot}{$^\dagger$} & \cellcolor{yellow!20}{\textcolor{Green}{\cmark}} & \cellcolor{yellow!60}{\textcolor{red}{\xmark}} & \cellcolor{yellow!60}{\textcolor{red}{\xmark}} & \textcolor{Green}{\cmark} & \textcolor{Green}{\cmark} & \textcolor{red}{\xmark} \\
\textbf{White-box Optimization} & \cellcolor{yellow!20}{\textcolor{Green}{\cmark}} & \cellcolor{yellow!60}{\textcolor{red}{\xmark}} & \cellcolor{yellow!60}{\textcolor{red}{\xmark}} & \textcolor{Green}{\cmark} & \textcolor{red}{\xmark} & \textcolor{Green}{\cmark} \\
\textbf{Generalization} & \cellcolor{yellow!20}{\textcolor{Green}{\cmark}} & \cellcolor{yellow!60}{\textcolor{red}{\xmark}} & \cellcolor{yellow!60}{\textcolor{red}{\xmark}} & \textcolor{red}{\xmark} & \textcolor{red}{\xmark} & \textcolor{red}{\xmark}\\
\Xhline{2pt}
\multicolumn{7}{l}{$^\dagger$ Categorized into engineering difficulties.}
\end{tabular}
}
\end{table}

\subsection{Our Contributions}
\begin{wraptable}{r}{0.48\linewidth}
\vspace{-6mm}
\hspace{-25mm}
\tiny
\begin{Verbatim}[label={\textbf{Framework Usage}}]
\textcolor{magenta}{import} GETA
\textcolor{Green}{\# General DNN model}
geta = GETA(\textcolor{blue}{model})
optimizer = geta.qasso()
\textcolor{Green}{\# Train as normal}
optimizer.step()
\textcolor{Green}{\# Quantized Pruned DNN}
geta.construct_subnet()
\end{Verbatim}
\hspace{-20mm}
\vspace{-7mm}
\end{wraptable}
To tackle the above challenges, we propose \algacro{}, a \textbf{G}eneral and \textbf{E}fficient \textbf{T}raining framework that \textbf{A}utomates joint structured pruning and quantization aware training. By streamlining the workflow, \algacro{} significantly reduces the engineering burdens and minimizes the user intervention (See \textbf{Framework Usage}).

As shown in~\cref{fig:overview.diagram}, \algacro{} begins by incorporating the parameterized quantization layer~\cite{goodparam} into the full model, which allows for layerwise bit widths to be learned during training (see~\cref{sec:quantization}). Next, the framework proposes a quantization-aware dependency graph (QADG) (see~\cref{sec:graph.optimization}) to address previously unconsidered graph transformations introduced by parameterized quantization layers, ensuring support for any architecture. To train the neural network using the quantization-aware dependency graph, we employ a quantization-aware structured sparse optimizer (see~\cref{sec:algorithm-description}) to determine the optimal tradeoff between the pruning ratio and bit width for each layer. Our main contributions are summarized as follows. 


\begin{itemize}[leftmargin=*,noitemsep, topsep=0pt]
\item \textbf{Quantization-Aware Dependency Graph (QADG)}. We propose the quantization-aware dependency graph (QADG) to support joint structured pruning and quantization applied to any quantization-aware deep neural network (QADNN). By eliminating the need to handle each architecture individually, QADG significantly reduces the model-specific engineering workloads. 

\item \textbf{Quantization-Aware Structured Sparse Optimizer (QASSO)}. We propose a quantization-aware structured sparse optimizer, to provide reliable joint structured pruning and mixed precision quantization-aware training.  To the best of our knowledge, QASSO is the first white-box joint optimizer that explicitly controls the sparsity ratio and bit width. Particularly, QASSO employs a partial projected stochastic gradient (PPSG) method to progressively converge towards bit width budget for training stability. Moreover, a joint learning strategy is introduced to address the conflicts between pruning and quantization for performance preservation. 

\item \textbf{Numerical Verification.} We test \algacro{} on a wide range of neural networks including ResNet, VGG, BERT, Phi2, and ViT, among others. The results indicate that \algacro{} achieves competitive (often superior) performance compared to state-of-the-art joint pruning and quantization methods in terms of performance and bit operations.
\end{itemize}

\section{Related Work}

\paragraph{Structured Pruning.} Structured pruning aims to remove redundant structures to reduce the size of DNNs. The identification of redundancies can be performed based on different criteria such as  sparsity~\cite{lin2019toward,wen2016learning,zhuang2020neuron,chen2017reduced,chen2018farsa,chen2021orthant,otov1,gao2020highly,meng2020pruning,yang2020deephoyer,frantar2023sparsegpt,idelbayev2022exploring}, Bayesian pruning~\cite{zhou2019accelerate,van2020bayesian}, ranking importance~\cite{li2020eagleeye,li2019exploiting,zhang2018systematic,otov2}, grouped kernel search~\cite{zhong2023one}, spectral graph analysis~\cite{laenen2023one}, reinforcement learning~\cite{he2018amc,chen2020storage}, and the lottery ticket hypothesis~\cite{frankle2018lottery,frankle2020linear}. 
Previous methods typically use a complicated, time-consuming process that requires extensive domain knowledge to effectively train the DNN. 
Another challenge is to define a pruning search space procedure that can be generalized to various DNNs. 
Recent frameworks, such as OTO~\cite{otov2,otov3,hesso} and DepGraph~\cite{fang2023depgraph}, have automated the construction of this search space using dependency graphs. However, these methods are not suitable for QADNNs due to prevalent issues such as weight-sharing and shape ambiguous operators.\footnote{Shape ambiguous operators are  operators (\eg, \textit{reshape} and \textit{view} in PyTorch) that transform input tensors into outputs of varying dimensions.} This limitation highlights the ongoing challenge of automating structured pruning for any \qadnn{}.


\smallskip
\noindent \textbf{Quantization-Aware Training (QAT).} The standard approach to QAT is applying a  uniform bit width across all layers. However, ~\cite{dong2019hawq,hong2022cadyq} empirically show that different layers in DNNs exhibit different sensitivities to quantization, suggesting that mixed-precision quantization may be a better approach for reducing performance loss. Several strategies including parameterized quantizers~\cite{goodparam}, heuristic approaches~\cite{QST}, reinforcement learning~\cite{elthakeb2020releq,balaskas2024hardware}, multi-objective Bayesian optimization~\cite{miriyala2024mixed}, and Hessian information guided methods~\cite{dong2019hawq,dong2020hawq,yao2021hawq} have been proposed to determine the optimal bit width for each layer.


\smallskip
\noindent \textbf{Joint Pruning and Quantization.} The challenge of using a joint approach lies in determining an optimal tradeoff between the pruning ratio and quantization levels for the model. Two primary strategies have been explored to address this challenge. The first strategy is to efficiently search the joint parameter space with prior work considering heuristics~\cite{QST}, reinforcement learning~\cite{balaskas2024hardware}, and Bayesian optimization~\cite{Clip-q}. The second strategy focuses on gradient-based optimization techniques. \cite{yang2020automatic} formulates a constrained optimization problem and solves it using a combination of ADMM and a greedy algorithm. In the follow up work~\cite{admm}, a reweighted optimization method is proposed with the goal of increasing the compression rate and reducing the number of hyperparameters of the ADMM-based method. \cite{DJPQ} approaches joint pruning and quantization by combining the VIBNet approach~\cite{dai2018compressing} with a differentiable quantizer defined by parameters that are learned.~\cite{van2020bayesian} unifies pruning and quantization by treating pruning as 0-bit quantization. ~\cite{wang2020apq} devises to train a quantization-aware accuracy predictor to deal with large joint parameter search space. 
To avoid a multi-stage process (first determining the configuration and then retraining the model), \cite{opq} proposes a one-shot optimization framework for the joint compression of DNNs. Other strategies are inspired by Markov chain and knowledge distillation. For instance, \cite{li2024markov} presents an interpretable joint pruning and quantization framework that borrows ideas from Markov chain, and \cite{li2024adaptive} applies an adaptive multi-teacher knowledge distillation method to train both the pruned and quantized networks. Our proposed framework falls under the gradient-based optimization approach.

\begin{figure*}[ht]
\centering
\includegraphics[width=\linewidth]{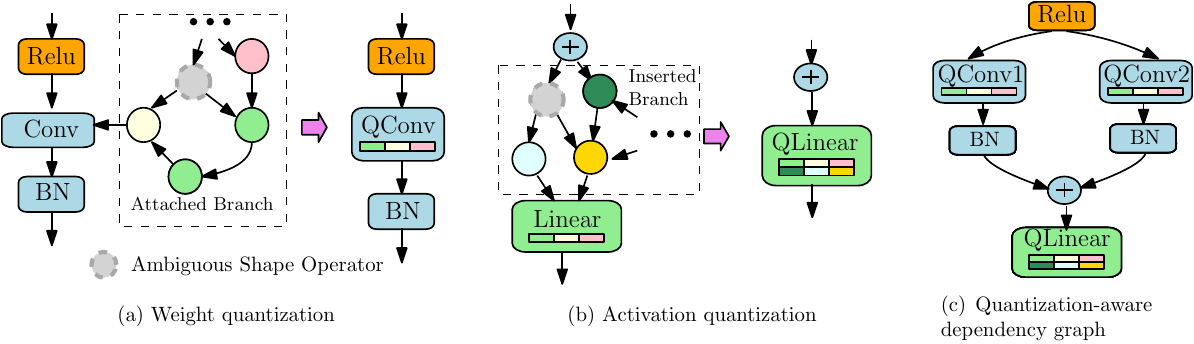}
\caption{Figure \textcolor{red}{2(a)} and \textcolor{red}{2(b)} illustrate the Quantization-Aware dependency graph analysis for weight quantization and activation quantization, respectively. Figure~\textcolor{red}{2(c)} presents a dependency graph after QADG analysis. Concrete examples are provided in~\cref{appendix:dependency.graph}.}
\label{fig:graph.analysis}
\end{figure*}

\section{Quantization with Learnable Parameters}~\label{sec:quantization}
Instead of freezing the bit width in standard QAT approach, we introduce quantization parameters $q_m$, $t$, and $d$ to learn the bit width of each layer~\cite{uhlich2019differentiable}. In particular, $q_m$ represents the maximum value to be mapped and $t$ is the exponent controlling the shape of the  mapping and $d$, known as quantization step size, characterizes the interval between adjacent quantization levels. For each quantization operation, we first quantize the input tensor ${x}$ as $\tilde{{x}}$ by applying a nonlinear function~\cite{DJPQ}
\begin{align}~\label{eq:nonlinear.mapping}
\tilde{{x}} = \text{sgn}({x}) \cdot 
\begin{cases}
|{x}|^t, & |{x}| \leq q_m, \\
(q_m)^t, & |{x}| > q_m.
\end{cases}
\end{align}
After applying the nonlinear mapping, we perform symmetric uniform quantization on $\tilde{{x}}$, resulting in the mapping
\begin{align}~\label{eq:quantization.mapping}
{x}^Q = d\lfloor \tilde{{x}} / d  \rceil,
\end{align}
where $\lfloor \cdot \rceil$ represents rounding to the nearest integer. The associated bit width $b$ is computed as 
\begin{equation}~\label{eq:bit.width.formula}
b = \log_2 \left(\frac{(q_m)^t}{d} + 1\right) + 1.
\end{equation}
To optimize the learnable quantization variables $d, t$, and $q_m$, we compute their gradients using the straight-through estimator\cite{goodparam}. In particular, the gradient of the quantization mapping with respect to $d, t$, and $q_m$ are given by
\begin{align}
&\nabla_d {x}^Q = \text{sgn}({x}) \cdot 
\begin{cases}~\label{eq:gradient.d}
\left( \lfloor \frac{|{x}|^t}{d}\rceil - \frac{|{x}|^t}{d} \right), & |{x}| \leq q_m, \\
\left( \lfloor \frac{(q_m)^t}{d} \rceil - \frac{(q_m)^t}{d} \right), & |{x}| > q_m, \\
\end{cases}\\
&\nabla_t {x}^Q = \text{sgn}({x}) \cdot
\begin{cases}~\label{eq:gradient.t}
|{x}|^t \log (|{x}|), & |{x}| \leq q_m, \\
(q_m)^t \log(q_m), & |{x}| > q_m,
\end{cases}\\
&\nabla_{q_m} {x}^Q = 
\begin{cases}~\label{eq:gradient.qm}
0, & |{x}| \leq q_m, \\
\text{sgn}({x}) t(q_m)^{t-1}, & |{x}| > q_m. \\
\end{cases}
\end{align}
\textbf{Remark.} The computation involving ${x}$ in this section represents element-wise operations.



\section{Quantization-Aware Dependency Graph}~\label{sec:graph.optimization} 
To automate joint structured pruning and quantization-aware training, we first establish a pruning search space. This space is defined as the set of minimally removable structures within the target DNN, ensuring that the remaining sub-network remains functional post-removal. However, establishing this search space automatically is challenging due to the complex topology of DNNs and the diverse roles of operators. Recent advancements in dependency graph~\cite{otov2,fang2023depgraph} address some of these challenges, but existing approaches remain insufficient for \qadnn{}.

To automate the construction of the pruning search space for \qadnn{}, we construct a Quantization-Aware Dependency Graph (QADG). QADG efficiently captures pruning dependencies across both weight and activation quantization. Challenges arise due to the numerous parameterized layers introduced during layer conversion, which include weight-sharing and shape-ambiguous layers that previous algorithms do not account for. Weight and activation quantization-aware layers exhibit distinct structural patterns. As shown in Fig.~\textcolor{red}{\ref{fig:graph.analysis}(a)}, weight quantization introduces a prominent \textit{attached branch} connected to the target layer. In contrast, activation quantization inserts a set of layers between the activation layer and its subsequent layer, referred to as the \textit{inserted branch}.

\begin{algorithm}[ht]
\caption{Constructing a Quantization-Aware Dependency Graph}
\label{alg:algorithm.quantization_aware_dep_graph}
\begin{algorithmic}[1]
\State \textbf{Input:} Trace graph $(\mathcal{V}, \mathcal{E})$ of \qadnn{}.
\State \textbf{Initialize:} $\mathcal{V}_\text{root}^\text{weight}$ = $\emptyset$,  $\mathcal{V}_\text{root}^\text{act}$ = $\emptyset$, and $\mathcal{V}_\text{end}^\text{act}$ = $\emptyset$.
\State Traverse $(\mathcal{V}, \mathcal{E})$ and add the root vertex of each attached branch to the set $\mathcal{V}_\text{root}^\text{weight}$.~\label{line:get_root_vertices_weight_quant}
\For{each $v\in \mathcal{V}_\text{root}^\text{weight}$}\label{line:weight_quant_graph_opt_start} 
    \State Find attached branch associated with root vertex $v$.
    \State Merge vertices in attached branch as vertex $\tilde{v}$.
    \State Replace $v$ with $\tilde{v}$.
\EndFor\label{line:weight_quant_graph_opt_end}
\State Traverse $(\mathcal{V}, \mathcal{E})$ and add the root vertex and end vertex of each inserted branch to $\mathcal{V}_\text{root}^\text{act}$ and $\mathcal{V}_\text{end}^\text{act}$,  respectively. 
\For{each pair $(v_\text{root},v_\text{end}) \in \mathcal{V}_\text{root}^\text{act} \times \mathcal{V}_\text{end}^\text{act}$}
    \State Merge vertices between $v_{\text{root}}$ and $v_{\text{end}}$ as vertex $\tilde{v}$.
    \State Replace $v_{\text{end}}$ with $\tilde{v}$.
    \State Add an edge from $v_\text{root}$ to $\tilde{v}$ into $\mathcal{E}$.~\label{line:add_line_between_root_new_end}
\EndFor
\State Conduct dependency graph analysis in~\cite{otov2}.\label{line:run_dep_graph_otov2}
\State \textbf{Output:} the QADG obtained from~\cref{line:run_dep_graph_otov2}. 
\end{algorithmic}   
\end{algorithm}

\noindent \textbf{Quantization-Aware Dependency Graph Analysis.} To tackle these challenges, as stated in~\cref{alg:algorithm.quantization_aware_dep_graph}, we propose QADG analysis. At~\cref{line:get_root_vertices_weight_quant}, we first traverse the trace graph $(\mathcal{V}, \mathcal{E})$ via depth-first search to identify the set of root vertices, $\mathcal{V}_\text{root}^\text{weight}$, for weight quantization. An example of a root vertex is \textit{Conv} in Fig.~\textcolor{red}{\ref{fig:graph.analysis}(a)}. We then identify attached branches, merge them as new vertices, and replace the root vertices with these new structures, as specified at~\cref{line:weight_quant_graph_opt_start}-\cref{line:weight_quant_graph_opt_end}. For activation quantization, we first locate the root and end vertices, such as \textit{Relu} and \textit{QLinear} in Fig.~\textcolor{red}{\ref{fig:graph.analysis}(a)}. Next, we identify the inserted branches, merge them as new vertices, and replace the end vertices with these new structures. To preserve the connectivity of \qadnn{}, we reconnect the root vertices with the newly formed end vertices in~\cref{line:add_line_between_root_new_end}. Through this optimization, we consolidate complex attached and inserted branches into single entities, allowing us to de-duplicate shared weights and eliminate shape-ambiguous vertices. Subsequently, we apply the dependency graph analysis from~\cite{otov2} to derive the QADG, which facilitates the construction of the pruning search space over the \qadnn{}, enabling joint structured pruning and quantization-aware training.
\section{\optname{}}\label{sec:algorithm-description}
After obtaining a QADG using~\cref{alg:algorithm.quantization_aware_dep_graph}, we obtain the pruning search space of the \qadnn{}, \ie, the parameter groups $\Gcal$. Each $g\in\mathcal{G}$ represents the collection of trainable variables in one minimally removal structure. We then apply our proposed \optname{} optimizer (see Algorithm~\ref{alg:algorithm.JPQ}) to solve the problem
\vspace{-2mm}
\begin{subequations}~\label{prob:main}
\begin{align}
\mathop{\text{minimize}}_{\substack{{{x}} \in \R{n}~\\ (d, q_m, t) \in \R{|\Lcal|} \times \R{|\Lcal|} \times \R{|\Lcal|}} } & f({x}, d, q_m, t)~\label{obj:main} \\
\text{s.t.} \quad \quad \quad \ &  \text{Card}\{g \in \Gcal | [{x}]_g = 0 \} = K,~\label{constr:sparsity} \\
& b_i \in [b_l, b_u], i \in \Lcal,~\label{constr:bit width} 
\end{align}
\end{subequations}
where $K$ represents the target sparsity ratio, $[b_l, b_u]$ specifies the target bit width range, and $\Lcal$ denotes the index set of layers that have parameterized quantization layers added, and $|\Lcal|$ represents the cardinality of set $\Lcal$, and bit width $b_i$ is computed using formula~\cref{eq:bit.width.formula} given in~\cref{sec:quantization}.

\begin{algorithm}[ht!]
\caption{\optname}
\label{alg:algorithm.JPQ}
\begin{algorithmic}[1]
\State \textbf{Inputs:} Initial weight parameters $x$ and quantization parameters $(d,q_m,t)$, learning rate schedule $\{\alpha_l\}$, number of warm-up steps $K_w$, bit width range $[b_l, b_u]$ with $b_u \geq b_l+1$, number of projection periods $B\in[1,b_u-b_l$], bit width reduction factor $b_r\in[1,(b_u-b_l)/B]$, number of projection steps $K_b$, number of pruning steps $K_p$, and number of pruning periods $P$.
\State Perform $K_w$ SGD steps on~\eqref{obj:main} to update $(x,d,q_m,t)$.\label{line:qasso.warmup}
\For{each projection period $ 0,1,\cdots,B-1$}~\label{line:projection.start}
\State $ b_u \gets b_u - b_r$.
\For{$k = 0,1,\cdots,K_b-1$}
\State Update $x$ using one step of SGD on~\eqref{obj:main}.
\State Update $(d,q_m,t)$ using~\cref{alg:ppsg}.
\EndFor
\EndFor~\label{line:projection.end}
\For{each pruning period $ 0,1,\cdots,P-1$}~\label{line:joint.start}
\State Compute saliency score~\cite{hesso} using $x$.
\State Compute the set of important groups $\Gcal_I$ and set of redundant groups $\Gcal_R$ using the saliency score.~\label{line:saliency.score}
\For{$k = 0,1,\cdots,K_p - 1$}
\State Update $(t, q_m)$ using one step of SGD on~\eqref{obj:main}.
\State Stochastic gradient $\hat{\nabla}_x f \approx \nabla_x f(x,d,q_m,t)$.
\State Compute $\gamma$ using \cref{eq:forget.rate.rule}.
\State Update $d$ with \cref{eq:quant.step.size.rule}.~\label{line:d.update}
\State Compute $x^Q$ from~\cref{eq:quantization.mapping}.
\State For currently scheduled learning rate $\alpha$, set
\begin{align}
[x]_{\Gcal_I} &\gets [x]_{\Gcal_I} - \alpha [\hat{\nabla}_x f]_{\Gcal_{I}} \ \ \text{and} \label{line:update.important} \\
[x]_{\Gcal_R} &\gets [x]_{\Gcal_R} - \alpha [\hat{\nabla}_x f]_{\Gcal_R} - \gamma [x^Q]_{\Gcal_R}.\label{line:update.redundant}
\end{align}
\EndFor
\EndFor~\label{line:joint.end}
\State Fixing the quantization parameters, say $(d^*,q_m^*,t^*)$, computed above, train~\cref{obj:main} over the weight parameters in the set of important groups $\Gcal_I$ to obtain $x^*$.\label{step:cooldown} 
\State \textbf{Outputs:} Parameters $(x^*,d^*,q_m^*,t^*)$. 
\end{algorithmic}
\end{algorithm}

\noindent\textbf{Overview of~\optname{}.} Our framework \optname{} (see~\cref{alg:algorithm.JPQ}) aims to compress the size of the DNN while preserving full model performance by removing redundant structures, determining the optimal bit width for each layer that has a parameterized quantization layer added, and recovering knowledge lost during pruning and quantization phases. This is accomplished through a sequential four-stage optimization process: warm-up stage, projection stage, joint stage, and a cool-down stage. The warm-up stage consists of optimizing over all trainable variables using the stochastic gradient 
(SGD) method or any of its variants at~\cref{line:qasso.warmup}, which allows us to achieve a better initialization for improved performance. Next, we step into the projection stage (see \cref{line:projection.start}-\ref{line:projection.end}), where we progressively reduce the bit width range until the bit width constraint~\cref{constr:bit width} is satisfied. This progressive technique enables us to transfer information lost in the low bit precision representation back to the current model. We then proceed to the joint stage (see \cref{line:joint.start}-\ref{line:joint.end}), where we progressively forget the quantized information (see~\cref{line:update.redundant}) within the redundant groups until the constraint~\cref{constr:sparsity} is satisfied. In addition, the bit width selected depends on the amount of information removed within each layer at each step. Specifically, when a significant amount of information is removed, we will consider employing a high bit width for quantization. Once we complete pruning and determine the bit width for each layer, we train the pruned and quantized model until convergence, referred to as the cool-down stage. The projection stage and joint stage are two essential components in our approach and we will discuss them in the next two subsections. 

\subsection{Projection Stage}
During the projection stage, we aim to compute a feasible bit width.  To do so, we consider the problem
\begin{subequations}~\label{prob:projection}
\begin{align}
\min_{\substack{{x} \in \R{n}~\\ (d, q_m, t) \in \R{|\Lcal|} \times \R{|\Lcal|} \times \R{|\Lcal|}} } \ & f({x}, d, q_m, t)~\label{obj:projection} \\
\text{s.t.} \quad \quad \quad \quad & b_i \in [b_l, b_u], \  i \in \Lcal.~\label{constr:projection.bit.width} 
\end{align}
\end{subequations}

\noindent\textbf{Related Approaches and Limitations.} In numerical optimization, projection methods and penalty methods are two of the most common approaches for training DNNs with explicit constraints. However, both approaches are inappropriate for our problem setting~\cref{prob:projection}. On one hand, the projection method is effective when the projection operator has a closed-form solution, while the projection operator associated with~\cref{constr:projection.bit.width} lacks this property. On the other hand, penalty methods (\eg,~\cite{nocedal1999numerical, bertsekas1997nonlinear}) consider a sequence of subproblems that relax the constraint by penalizing its violation in the objective function. Its effectiveness is highly dependent on an appropriate selection of the penalty parameter, which often necessitates hyperparameter tuning.

\begin{algorithm}
\caption{Partial Projected Stochastic Gradient.}
\label{alg:ppsg}
\begin{algorithmic}[1]
\State \textbf{Inputs:} Variables $d$, $q_m$, $t$, and bit width range $[b_l, b_u]$.
\State Update variables $d,q_m,t$ using SGD or its variants.
\State Determine the range $[d_{\min},d_{\max}]$ of $d$ using $(q_m,t)$ and formula~\cref{eq:bit.width.formula}.
\State Project $d$ onto $[d_{\min},d_{\max}]$.
\State \textbf{Outputs:} $d$, $q_m$, $t$.
\end{algorithmic}   
\end{algorithm}

Given the above discussion, we propose a variant of a projected stochastic gradient method called partial projected stochastic gradient (\ppsg) (see~\cref{alg:ppsg}). In this approach, the projection is applied only to the variable $d$. Alternatively, one could apply the projection operation to either $q_m$ or $t$, but our numerical testing shows this often leads to training instability (gradients explode or vanish). This instability stems from exponential transformations in their gradients. The terms $(q_m)^t$ and $(q_m)^{t-1}$ in~\cref{eq:gradient.t}-\eqref{eq:gradient.qm} create highly nonlinear dependencies. Abrupt projection could cause significant value changes, leading to gradient explosions and training collapse. In contrast, the gradient of $d$ is independent of such exponential effects on $d$, making it an ideal candidate to control the bit width range.

\subsection{Joint Stage}
During the joint stage, we aim to identify redundant groups of $\Gcal$, to forget the information within the redundant groups and transfer to the important groups being aware of the quantization parameters, and to determine the layerwise bit widths in terms of the information removed at each layer. 

We first partition our parameter group $\Gcal$ into a set of important groups $\Gcal_I$ and a set of redundant groups $\Gcal_R$ based on saliency scores detailed in~\cite{hesso} at~\cref{line:saliency.score}. 
For variables in $\Gcal_I$, we proceed with vanilla stochastic gradient or its variants at~\cref{line:update.important}. For variables in $\Gcal_R$, we progressively project them to zero by forgetting redundant information at~\cref{line:update.redundant}. Due to the addition of parameterized quantization layers to the original model, weight parameters ${x}$ are converted to its quantized counterpart, denoted as ${x}^Q$. This observation underscores the necessity to forget the quantized information $[{x}^Q]_{\Gcal_R}$ instead of the original information $[{x}]_{\Gcal_R}$. Additionally, it is essential to develop a new update rule for the forget rate $\gamma$ that is aware of quantization parameters to better maintain and transfer the knowledge. 

For ease of notation, we denote the stochastic gradient of function $f({x},d,q_m,t)$ with respect to ${x}$ as $\hat{\nabla}_{{x}} f$. Consequently, the search direction $s({x})$ for updating ${x}$ is
\begin{align}~\label{eq:search.direction}
s({x}) = 
\begin{cases}
- \alpha [\hat{\nabla}_{{x}} f]_g,& g \in \Gcal_I, \\
- \alpha [\hat{\nabla}_{{x}} f]_g - \gamma [{x}^Q]_g,& g \in \Gcal_R.
\end{cases}
\end{align}
The quantized value ${x}^Q$ in~\cref{eq:search.direction} can be rewritten as 
\begin{equation}~\label{eq:quantized.value}
    {x}^Q = \text{sgn}({x}) \cdot \text{clip}_{q_m}^t(|{x}|) + d \cdot \text{sgn}({x}) \cdot R({x}), 
\end{equation}
where the clipped value can be written as
\begin{align}~\label{eq:clip.value}
\text{clip}_{q_m}^t(|{x}|) = 
\begin{cases}
|{x}|^t, & |{x}| \leq q_m, \\
(q_m)^t, & |{x}| > q_m,
\end{cases}
\end{align}
and the residual value is given by
\begin{align}~\label{eq:residual.value}
R({x}) =
\begin{cases}
\lfloor \frac{|{x}|^t}{d}\rceil - \frac{|{x}|^t}{d}, & |{x}| \leq q_m, \\
\lfloor \frac{(q_m)^t}{d}\rceil - \frac{(q_m)^t}{d}, & |{x}| > q_m.
\end{cases}
\end{align}
We denote the angle between $-[\hat{\nabla}_x f]_g$ and $-[\text{sgn}(x)\cdot \text{clip}_{q_m}^t(|x|)]_g$ as $\theta_{\gamma}$ and the angle between $-[\hat{\nabla}_x f]_g$ and $-[\text{sgn}(x)\cdot d \cdot R(x)]_g$ as $\theta_{d}$. The $clip$ represents the mean of the clipped value within the redundant group $\Gcal_R$, i.e., 
\begin{align}~\label{eq:clip.mean}
clip = \mathrm{mean}\left([\text{clip}_{q_m}^{t}(|x|)]_{\Gcal_R}\right).
\end{align}
With the above notations, the forget rate $\gamma$ selection rule is expressed, for pre-specified small $\epsilon$ and $\eta \in (0,1)$, as
\begin{align}\label{eq:forget.rate.rule}
\gamma = 
\begin{cases}
0 , & clip \leq \epsilon, \\
1 - \frac{K_p - k -1}{K_p - k}, & \cos(\theta_{\gamma}) \geq 0, clip > \epsilon, \\
-\frac{(1-\eta)\alpha \|[\hat{\nabla}_x f]_g\|}{\cos(\theta_{\gamma}) \|[\text{sgn}(x) \cdot \text{clip}_{q_m}^{t}(|x|)]_g\|}, &
    \cos(\theta_{\gamma}) < 0, clip > \epsilon.
\end{cases}
\end{align}
The quantization step size $d$ selection rule is, for $\xi \in (0,1)$, 
\begin{align}\label{eq:quant.step.size.rule}
d = 
\begin{cases}
\frac{(q_m)^t}{2^{b_l - 1} - 1}, & \cos(\theta_d) \geq 0, \\
-\frac{\xi \eta \alpha \|[\hat{\nabla}_x f]_g\|}{\gamma \cos(\theta_d)\|[\text{sgn}(x) \cdot R(x)]_g\|}, & \cos(\theta_d) < 0. \\
\end{cases}
\end{align}

\noindent \textbf{Interpretation of Update Rules for $\gamma$ and $d$.} At a high level, the update rule for the forget rate and quantization step size ensures that the search direction in~\cref{eq:search.direction} is a descent direction for the objective function $f$, as stated in~\cref{prop:descent.direction}. Consequently, forgetting knowledge stored in the redundant groups for pruning and quantizing the variables jointly in this manner can make progress towards convergence. Therefore, the conflict between pruning and quantization is largely resolved via our design.

\smallskip
\noindent{}\textbf{Remarks.} When the mean of the clipped values within the redundant group $\Gcal_R$ is relatively small, we reasonably infer that little knowledge is retrieved in the redundant group. Therefore, we set the forget rate to 0 and directly project all parameters in the redundant group $\Gcal_R$ to zero. Otherwise, our forget rate rule is divided based on the angle between the gradient and clipped values. When $\cos (\theta_{\gamma}) \geq 0$, any positive values can be chosen where we select it as a uniform forgetting rate within $K_p$ steps.
The quantization step size $d$ is divided into two cases in terms of the angle between the gradient and the residual values. When $\cos (\theta_d) \geq 0$, $d$ can be selected as any positive values. In this scenario, we consider a low bit width for quantization and specifically, $d$ is selected such that the computed bit width is equal to $b_l$, the min of the bit width range $[b_l, b_u]$. For details of the joint stage implementation, one can refer to~\cref{appendix:safeguard}.

\begin{proposition}~\label{prop:descent.direction}
Let $\hat{\nabla}_x f$ be the full gradient of function $f(x,d,q_m,t)$ with respect to $x$. With forget rate $\gamma$ selection rule~\cref{eq:forget.rate.rule} and quantization step size $d$ selection rule~\cref{eq:quant.step.size.rule}, the search direction $s(x)$ is a descent direction for the function $f$ with respect to $x$ at $x$.
\end{proposition}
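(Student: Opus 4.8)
First I would unpack what must be shown. The parameters $(d,q_m,t)$ are held fixed here, so ``descent direction for $f$ with respect to $x$'' means $\langle \nabla_x f(x,d,q_m,t), s(x)\rangle \le 0$, with strict inequality whenever $\nabla_x f(x)\neq 0$; the usual first-order Taylor argument (recall $f$ is $C^1$) then yields an interval of step sizes along which $f$ strictly decreases. Since the proposition takes $\hat{\nabla}_x f$ to be the \emph{full} gradient, write $\nabla:=\hat{\nabla}_x f=\nabla_x f(x,d,q_m,t)$ and, using that $s(x)$ is defined block-wise over $\Gcal=\Gcal_I\cup\Gcal_R$ in \cref{eq:search.direction}, split $\langle\nabla,s(x)\rangle=\sum_{g\in\Gcal_I}\langle[\nabla]_g,[s(x)]_g\rangle+\sum_{g\in\Gcal_R}\langle[\nabla]_g,[s(x)]_g\rangle$. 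Each term with $g\in\Gcal_I$ equals $-\alpha\|[\nabla]_g\|^2\le 0$, strict when $[\nabla]_g\neq 0$, so only the redundant blocks require work.

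Fix $g\in\Gcal_R$ and abbreviate $w:=[\nabla]_g$, $u:=[\text{sgn}(x)\cdot\text{clip}_{q_m}^t(|x|)]_g$, $v:=[\text{sgn}(x)\cdot R(x)]_g$. By \cref{eq:quantized.value} we have $[x^Q]_g=u+d\,v$, hence $[s(x)]_g=-\alpha w-\gamma u-\gamma d\,v$; and by the definitions of $\theta_\gamma$ (angle between $-w$ and $-u$) and $\theta_d$ (angle between $-w$ and $-d\,v$, equal to the angle between $w$ and $v$ since $d>0$),
\[
\langle w,[s(x)]_g\rangle=-\alpha\|w\|^2-\gamma\,\|w\|\,\|u\|\cos\theta_\gamma-\gamma\,d\,\|w\|\,\|v\|\cos\theta_d.
\]
Two well-definedness facts come first. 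From \cref{eq:forget.rate.rule}: either $\gamma=0$ (when $clip\le\epsilon$), or $\gamma=1/(K_p-k)>0$ (since $0\le k\le K_p-1$), or $\gamma=-(1-\eta)\alpha\|w\|/(\cos\theta_\gamma\|u\|)\ge 0$ because its denominator is negative; so $\gamma\ge 0$ always. From \cref{eq:quant.step.size.rule}: either $d=(q_m)^t/(2^{b_l-1}-1)>0$, or $d=-\xi\eta\alpha\|w\|/(\gamma\cos\theta_d\|v\|)>0$ because $\gamma>0$ and $\cos\theta_d<0$ in that branch; so $d>0$ always, and the rules never divide by zero (the degenerate cases $\|u\|=0$, which forces $clip\le\epsilon$; $w=0$, which kills the block's contribution; $\|v\|=0$, which removes the last term, are all routed to safe branches).

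The remaining step is a short case split. If $clip\le\epsilon$ then $\gamma=0$ and $\langle w,[s(x)]_g\rangle=-\alpha\|w\|^2$. If $\cos\theta_\gamma\ge0$ (so the second term is $\le0$): when $\cos\theta_d\ge0$ the third term is $\le0$ and $\langle w,[s(x)]_g\rangle\le-\alpha\|w\|^2$; when $\cos\theta_d<0$, substituting the closed form of $d$ makes the third term equal $+\xi\eta\alpha\|w\|^2$, so $\langle w,[s(x)]_g\rangle\le-(1-\xi\eta)\alpha\|w\|^2$. If $\cos\theta_\gamma<0$, substituting the closed form of $\gamma$ makes the second term equal $+(1-\eta)\alpha\|w\|^2$: when $\cos\theta_d\ge0$ this gives $\langle w,[s(x)]_g\rangle\le-\eta\alpha\|w\|^2$, and when $\cos\theta_d<0$ the third term contributes $+\xi\eta\alpha\|w\|^2$, giving $\langle w,[s(x)]_g\rangle=-\eta(1-\xi)\alpha\|w\|^2$. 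In every case $\langle w,[s(x)]_g\rangle\le-c_g\alpha\|w\|^2$ with $c_g\in\{1,\,1-\xi\eta,\,\eta,\,\eta(1-\xi)\}\subset(0,1]$, and the strict positivity of $c_g$ is exactly where $\eta\in(0,1)$ and $\xi\in(0,1)$ are consumed. Summing over $\Gcal_I$ and $\Gcal_R$, $\langle\nabla,s(x)\rangle\le-\alpha\,(\min_g c_g)\sum_{g}\|[\nabla]_g\|^2\le 0$, strict whenever $\nabla_x f(x)\neq0$, so $s(x)$ is a descent direction.

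The main obstacle is not conceptual but the sign bookkeeping: one must confirm that every branch of the $\gamma$- and $d$-rules returns a nonnegative/positive value and never divides by zero (so that all the $\cos\theta_\gamma,\cos\theta_d$ sign statements invoked in the case analysis are legitimate), and then check that, after substituting the three closed-form expressions, the two potentially positive terms are strictly dominated by the $-\alpha\|w\|^2$ term — which is precisely the role of the reserve factors $1-\eta$, $\eta$, $\xi$ engineered into the rules. Disposing of the degenerate blocks ($\|u\|$, $\|v\|$, or $w$ equal to zero) cleanly is the only other point that needs attention.
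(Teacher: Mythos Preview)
Your argument is correct and follows essentially the same route as the paper's proof: split into $\Gcal_I$ and $\Gcal_R$, decompose $[x^Q]_g$ into clip and residual parts via \cref{eq:quantized.value}, and then do a case analysis on the signs of $\cos\theta_\gamma$ and $\cos\theta_d$, substituting the closed forms from \cref{eq:forget.rate.rule}--\cref{eq:quant.step.size.rule}. The only cosmetic difference is that the paper reaches the identity $\langle w,u\rangle=\|w\|\|u\|\cos\theta_\gamma$ via an orthogonal decomposition of $[s_{\text{clip}}(x)]_g$ into components parallel and perpendicular to $w$, whereas you invoke the angle definition directly; your explicit constants $c_g\in\{1,\,1-\xi\eta,\,\eta,\,\eta(1-\xi)\}$ and the well-definedness checks are a bit more careful than the paper's presentation but change nothing in substance.
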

\begin{proof}
See \cref{appendix:proof}
\end{proof}


\section{Numerical Experiments}
In this section, we present numerical experiments to demonstrate the effectiveness of our approach, accompanied by ablation studies to assess the contribution of each component to the success of~\algacro{}.~\footnote{Experiment setup details are provided in~\cref{appendix:numerical.experiment.setup}.}

\smallskip
\noindent \textbf{DNN Architectures and Datasets.} The experiments are performed across a wide range of popular CNN architectures, such as VGG7~\cite{karen2014very}, \resnettwenty{}, \resnetfifty{} and RseNet56~\cite{he2016deep}, and transformer architectures, such as Bert~\cite{kenton2019bert}, varying vision transformers~\cite{alexey2020image} and Large Language Models (LLMs) such as Phi2-2.7B~\cite{microsoft2023phi2}. The selected datasets include the benchmark CIFAR10~\cite{krizhevsky2009learning}, ImageNet2012~\cite{deng2009imagenet}, Squad~\cite{kenton2019bert}, and commen-sense benchmarks in LM-Evaluation-Harness~\cite{eval-harness}.

\smallskip
\noindent\textbf{Comparing Methods.} To validate the effectiveness and superiority of our framework, we consider the following methods for comparisons: ANNC~\cite{yang2020automatic}, QST-B~\cite{QST}, DJPQ~\cite{DJPQ} along with its variants, BB~\cite{van2020bayesian}, Clip-Q~\cite{Clip-q}, OBC~\cite{frantar2022optimal}, and a standard first-prune-then-quantize method. All the compared methods consider both pruning and quantization. Furthermore, they use the same strategy that first conducts a search based on the same pretrained model and then fine-tunes the resulting model with the configurations obtained from the search. 

\smallskip
\noindent\textbf{Evaluation Metrics.} We evaluate the performance of each method on two folds, model performance and computational efficiency. The performance depends on the downstream applications with common metrics such as accuracy for image classification and EM or F1-scores for question and answering tasks. Computational efficiency is assessed by BOPs, where lower values indicate more compact models with typically faster inference. For ease of comparison, we report the relative BOP ratio against the baseline full precision models. 



\subsection{CNN Architectures}

\begin{table}[ht]
\centering
\caption{ResNet20 on CIFAR10 dataset. The \textcolor{red}{red} and \textcolor{orange}{orange} represent the best and second-best results, respectively, in the last two columns. Same rule is followed in~\cref{tab:vgg7.cifar10} and~\cref{tab:resnet50.imagenet}.}
\vspace{-3mm}
\label{tab:resnet20.cifar10}
\resizebox{0.475\textwidth}{!}{
\begin{tabular}{lccccc}
\Xhline{2pt}
Method & Pruning & \makecell{Wt~\\ Quant} & \makecell{Act~\\ Quant} & \makecell{Accuracy~\\ (\%)} & \makecell{Rel.~\\ BOPs (\%)} \\
\Xhline{0.5pt}
Baseline & \xmark & \xmark & \xmark  & 91.70 & 100 \\
\hdashline
ANNC~\cite{yang2020automatic} & \text{Unstructured} & \cmark & \xmark  & 90.90 & 6.1 \\
QST-B~\cite{QST} & \text{Unstructured} & \cmark & \xmark  & \cellcolor{red!40} 91.50 & \cellcolor{orange!40} 5.1 \\
\hdashline
\algacro{} & Structured & \cmark & \xmark & \cellcolor{orange!40} 91.42 & \cellcolor{red!40} 4.5 \\
\Xhline{2pt}
\end{tabular}
}
\end{table}

\begin{table*}[ht]
\centering
\caption{Comparison of \algacro{} vs. Structured Pruning followed by Post-Training Quantization (PTQ) for BERT on SQuAD.}
\vspace{-3mm}
\label{tab:joint_vs_sequential_compression_bert_squad}
\begin{minipage}{0.6\linewidth}
\resizebox{\linewidth}{!}{
\begin{tabular}{@{}l@{\hspace{4pt}}c@{\hspace{4pt}}c@{\hspace{4pt}}c@{\hspace{4pt}}c@{\hspace{4pt}}c@{}}
\Xhline{2pt}
Method & Sparsity & EM (\%) & F1 (\%) & \makecell{BOPs\\(GB)} & \makecell{Rel.\\BOPs (\%)} \\
\Xhline{0.5pt}
Baseline & 0\% & 80.08 & 88.50 & 13.57  & 100.0 \\
\hdashline
\multirow{5}{*}{\makecell[l]{OTO~\cite{otov1} followed up 8-bit PTQ}} 
& 10\% & 73.87 & 83.43 & 3.17 & 23.4 \\
& 30\% & 72.95 & 83.31 & 2.71 & 20.0 \\
& 50\% & 72.71 & 83.30 & 2.26 & 16.7 \\
& 70\% & 71.24 & 82.57 & 1.80 & 13.3 \\
\hdashline
\multirow{5}{*}{\algacro{}}
& 10\% & 78.26 & 86.06 & 2.63 & 19.4 \\
& 30\% & 77.28 & 85.70 & 2.29 & 16.9 \\
& 50\% & 76.74 & 85.87 & 1.96 & 14.4 \\
& 70\% & 75.88 & 84.74 & 1.62 & 11.9 \\
\Xhline{2pt}
\end{tabular}
}
\end{minipage}
\begin{minipage}{0.28\linewidth}
\includegraphics[width=0.85\textwidth]{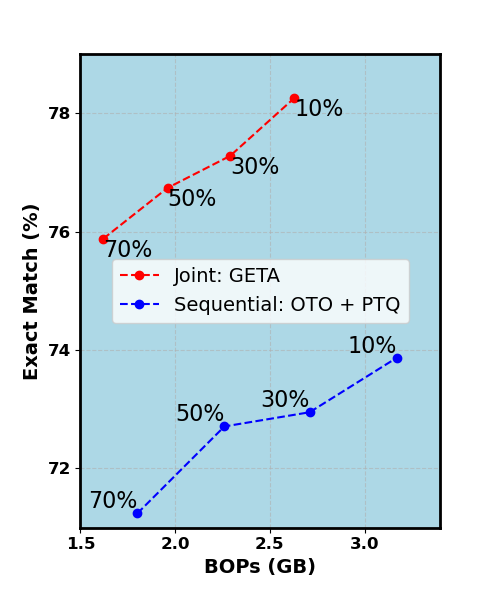}
\end{minipage}
\vspace{-5mm}
\end{table*}

\noindent\textbf{\resnettwenty{} on \cifarten{}.} We first test our framework GETA using ResNet20 on CIFAR10 dataset. For fair comparison, only weight quantization is applied, excluding activation quantization. As shown in~\cref{tab:resnet20.cifar10}, GITA achieves a 4.5\% relative BOPs compression ratio with only a loss of 0.28\% in test accuracy, which demonstrates significantly better performance than ANNC~\cite{yang2020automatic}. Compared to QST-B~\cite{QST}, \algacro{} reduces BOP by 13\% with only a minimal accuracy drop of 0.08\%. We argue that \algacro{} is better suited for practical applications, as QST-B focuses on joint unstructured pruning and quantization. While unstructured pruning tends to deliver higher accuracy at similar compression ratios, its theoretical speedups are challenging to achieve without specialized hardware and software supports~\cite{8916419,zhang2017high,hao2019fpga}. In contrast, the structurally pruned and quantized model produced by \algacro{} is more easily deployed in practical applications.

\begin{table}[ht!]
\centering
\small
\caption{VGG7 on CIFAR10 dataset.}
\label{tab:vgg7.cifar10}
\resizebox{0.47\textwidth}{!}{
\begin{tabular}
{lccccc}
\Xhline{2pt}
Method  & Pruning & \makecell{Wt\\Quant} & \makecell{Act\\ Quant} & \makecell{Accuracy~\\ (\%)} & \makecell{Rel.\\ BOPs (\%)} \\
\Xhline{0.5pt}
Baseline & \xmark & \xmark & \xmark & 93.05 & 100 \\
\hdashline
DJPQ~\cite{DJPQ} & \text{Structured} & \cmark & \cmark & 91.54 & 0.48 \\
DJPQ-restrict~\cite{DJPQ} & \text{Structured} & \cmark & \cmark & 91.43 & 0.46 \\
BB~\cite{van2020bayesian} & \text{Structured} & \cmark & \cmark & \cellcolor{orange!40} 91.96 & \cellcolor{red!40} 0.29 \\
\hdashline
\algacro{} & \text{Structured} & \cmark & \cmark & \cellcolor{red!40} 92.57 & \cellcolor{orange!40} 0.41 \\
\Xhline{2pt}
\end{tabular}
}
\end{table}

\noindent \textbf{\vggseven{} on \cifarten{}.} We then test \algacro{} using \vggseven{} on \cifarten{} to compare with the joint structured pruning and quantization benchmarks. In this case, we enable both weight and activation quantization and report the results in~\cref{tab:vgg7.cifar10}. Based on the results, \algacro{} could significantly outperform other competitors in terms of the test accuracy by 
0.61\% - 1.14\%, and achieves the second best relative BOP ratio which is only worse than BB~\cite{van2020bayesian}. BB separates the model architecture compression and training stages, requiring substantial effort for each. In contrast, \algacro{} offers practical advantages, including efficiency and broad architecture compatibility, enabling an end-to-end, automated joint structured pruning and quantization approach.

\begin{table}[ht!]
\vspace{-2mm}
\centering
\small
\caption{ResNet50 on ImageNet dataset.}\label{tab:resnet50.imagenet}
\vspace{-1mm}
\resizebox{0.475\textwidth}{!}{
\begin{tabular}{lccccccc}
\Xhline{2pt}
Method & Pruning & \makecell{Wt~\\ Quant} & \makecell{Act~\\ Quant}  & \makecell{Accuracy~\\ (\%)} & \makecell{Rel.~\\ BOPs (\%)} \\
\Xhline{0.5pt}
Baseline & \xmark & \xmark & \xmark & 76.13 & 100 \\
\hdashline
OBC~\cite{frantar2022optimal} & Semi-Structured & \cmark & \xmark & 71.47 & 6.67 \\
Clip-Q~\cite{Clip-q} & Unstructured & \cmark & \xmark  & 73.70 & \cellcolor{orange!40} 6.30 \\
\hdashline
\algacro{} (40\% sparsity) & \textbf{Structured} & \cmark & \xmark & \cellcolor{red!40} 75.10 &  6.97  \\
\algacro{} (50\% sparsity) & \textbf{Structured} & \cmark & \xmark & \cellcolor{orange!40} 74.40 &  \cellcolor{red!40} 5.38  \\
\Xhline{2pt}
\end{tabular}
}
\end{table}

\noindent \textbf{\resnetfifty{} on \imagenet{}.} We next test \algacro{} using \resnetfifty{} on \imagenet{}. We select \resnetfifty{} on \imagenet{} because it serves as one of most common benchmarks in structured pruning works~\cite{lin2019towards,fang2023depgraph}, while studies on joint structured pruning and quantization seem absent to the best of our knowledge. Therefore, we compare with joint unstructured pruning or semi-structured pruning and quantization works OBC~\cite{frantar2022optimal} and Clip-Q~\cite{Clip-q}. Unlike the \cifarten{} experiments, we start the training from a pretrained checkpoint. As the results present in~\cref{tab:resnet50.imagenet}, \algacro{} could consistently outperform them in terms of both test accuracy and relative BOP ratios. Considering the difficulty of performance preservation for structured pruning, \algacro{} demonstrates superior performance to existing works.

\subsection{Transformer}
\noindent \textbf{\bert{} on \squad{}.} We now apply \algacro{} to the transformer architecture. The first is the representative encoder-based BERT model~\cite{vaswani2017attention} on the SQuAD benchmark~\cite{rajpurkar2016squad}. While previous works on joint quantization and structured pruning have not been applied to the transformer architecture, we make a more relevant comparison by contrasting our joint optimization approach with the sequential baseline, which first applies pruning-aware training (HESSO)~\cite{hesso} and then performs standard post-training quantization (PTQ)~\cite{torch2019nips}. An alternative sequential baseline, the quantize-then-prune approach, is excluded from our comparison for the following two reasons: \textit{(i)} Applying PTQ to the full model introduces challenges when attempting to prune the model afterward, as calculating gradients with respect to quantized values requires careful handling. \textit{(ii)} A recent work~\cite{harma2024effective} mathematically shows that prune-then-quantize approach is the optimal sequential strategy. Therefore, we focus on comparing \algacro{} with the prune-then-quantize baselines.


The comparison in~\cref{tab:joint_vs_sequential_compression_bert_squad} clearly highlights the advantages of joint structured pruning and quantization during training, versus only pruning at training time and quantization during post-training. At all sparsity ratios, \algacro{} consistently outperforms the multi-stage approach by a large margin. In particular, we observe improvements in exact-match rates (EM) and F1-scores while achieving better compression rates. These results empirically validate that joint pruning and quantization during training is superior to the conventional approach of pruning-aware training followed by post-training quantization, both in terms of model quality and computational efficiency.

\begin{wrapfigure}{r}{0.4\linewidth}
\vspace{-4mm}
\hspace{-4mm}
\begin{minipage}{\linewidth}
\includegraphics[width=\linewidth]{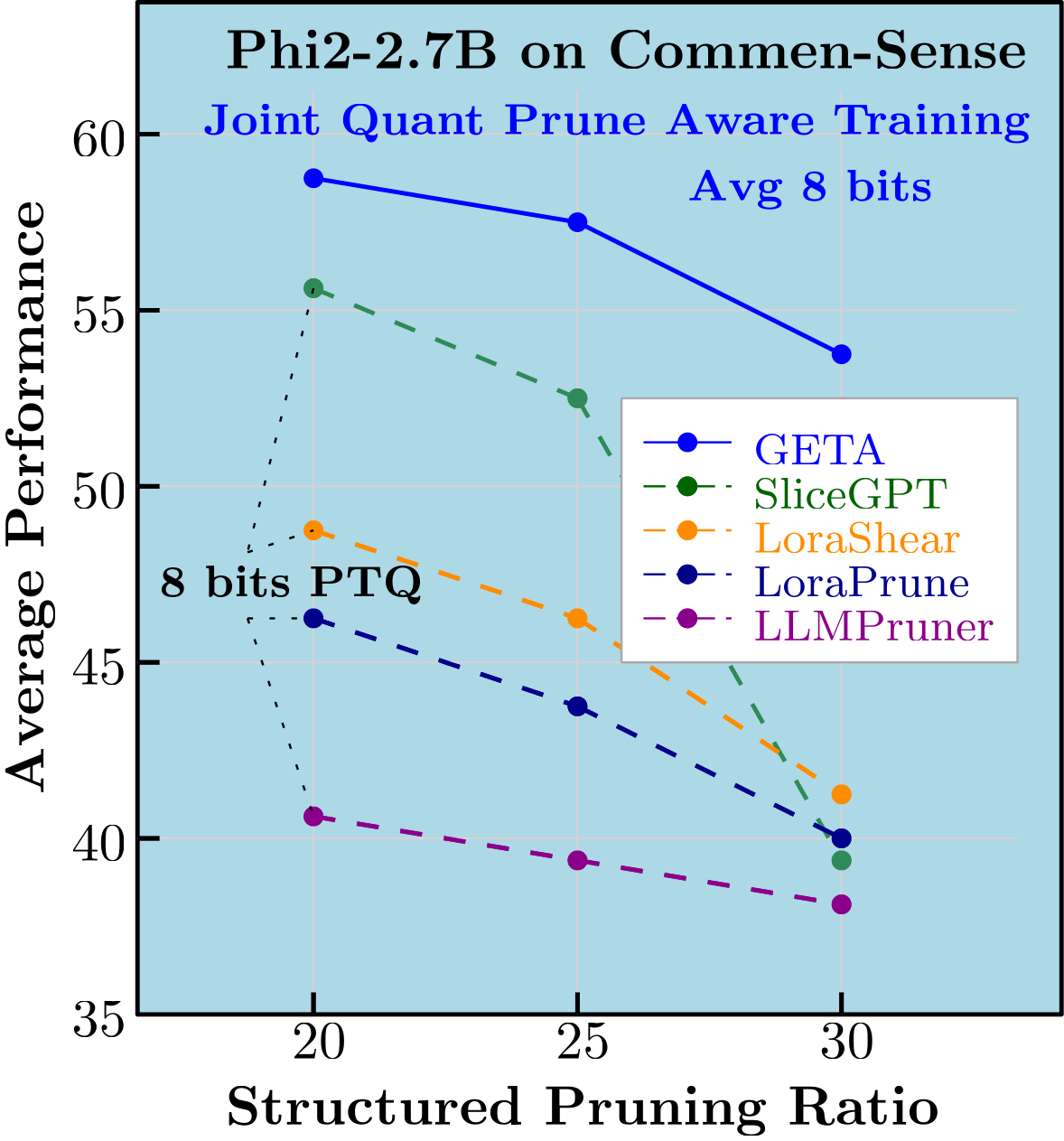}
\end{minipage}
\vspace{-2mm}
\caption{Phi2-2.7B.}
\label{fig:phi2-2.7B}
\vspace{-4mm}
\end{wrapfigure}

\noindent \textbf{Phi2 on Common-Sense.} We next evaluate \algacro{} on popular large language models. Since \algacro{} leverages full gradient information, we select Phi2-2.7B~\cite{microsoft2023phi2}, a model with fewer than 3 billion parameters, to ensure computational feasibility on a single A100 GPU. Similar to the experiments on BERT, we compare \algacro{} with a prune-then-quantize baseline. This baseline first applies pruning-aware training techniques, including SliceGPT~\cite{ashkboos2024slicegpt}, LoraShear~\cite{chen2023lorashear}, LoraPrune~\cite{zhang2023loraprune}, and LLMPruner~\cite{ma2023llm}, followed by PTQ. For a fair comparison, the average bit width across all layers after applying \algacro{} is set to approximately 8 bits, while the baseline uses uniform 8-bit PTQ. As shown in \cref{fig:phi2-2.7B}, \algacro{} consistently outperforms all prune-then-quantize baselines in terms of average performance in common-sense tasks including BoolQ, PIQA, HellaSwag, WinoGrande, ARC-e, ARC-c and OBQA.

\begin{figure*}[ht]
\centering
\begin{subfigure}{0.6\textwidth}
\centering
\resizebox{\linewidth}{!}{
\begin{tabular}
{cccccc}
\Xhline{2pt}
Warmup & Projection & Joint & \makecell{Cool~\\ Down} & \makecell{ResNet56\\(\%)} & \makecell{Phi2\\(\%)} \\
\Xhline{0.5pt}
\cellcolor{yellow!20}{\textcolor{Green}{\cmark}} & \cellcolor{yellow!20}{\textcolor{Green}{\cmark}} & \cellcolor{yellow!20}{\textcolor{Green}{\cmark}} & \cellcolor{yellow!20}{\textcolor{Green}{\cmark}} & \cellcolor{yellow!20}{94.61} & \cellcolor{yellow!20}{58.64} \\
\textcolor{red}{\xmark} & \textcolor{Green}{\cmark} & \textcolor{Green}{\cmark} & \textcolor{Green}{\cmark} & 94.11 &  56.32 \\
\textcolor{Green}{\cmark} & \textcolor{red}{\xmark} & \textcolor{Green}{\cmark} & \textcolor{Green}{\cmark} & 94.10 &  55.17 \\
\textcolor{Green}{\cmark} & \textcolor{Green}{\cmark} & \textcolor{red}{\xmark} & \textcolor{Green}{\cmark} & 93.63 &  52.81 \\
\textcolor{Green}{\cmark} & \textcolor{Green}{\cmark} & \textcolor{Green}{\cmark} & \textcolor{red}{\xmark} & 91.32 &  51.24\\
\Xhline{2pt}
\end{tabular}
}
\caption{}
\label{subfig:ablation.four.stage}
\end{subfigure}
\begin{subfigure}{0.3\textwidth}
\centering
\includegraphics[width=\textwidth]{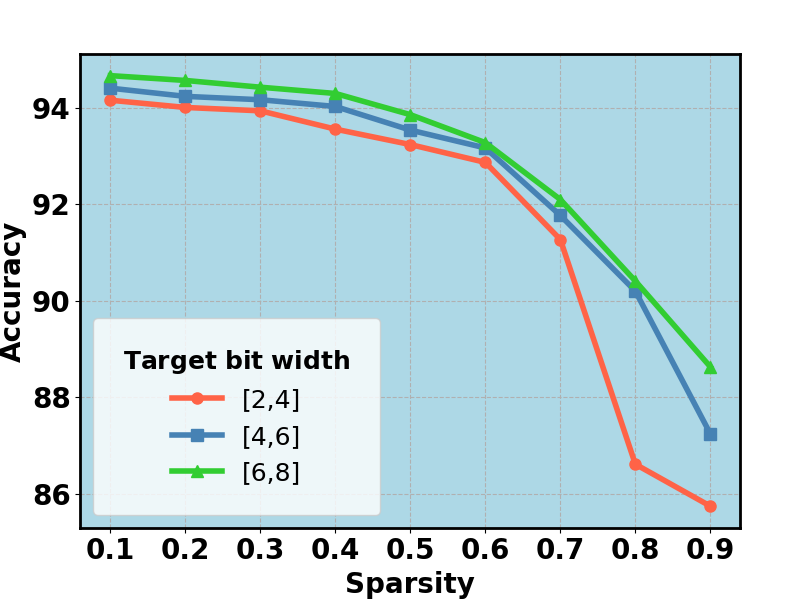}
\caption{}
\label{subfig:ablation.compression.limit}
\end{subfigure}
\caption{The~\cref{subfig:ablation.four.stage} presents an ablation study evaluating the necessity of the four distinct stages of the QASSO optimizer using ResNet56 on the CIFAR10 benchmark and Phi2-2.7B on a common-sense task. The last two columns indicate the model's test accuracy. The~\cref{subfig:ablation.compression.limit} illustrates the limits and boundaries of various compression techniques applied to ResNet56 on the CIFAR10 dataset.}
\label{fig:ablation}
\end{figure*}

\noindent \textbf{Vision Transformers.} Finally, we evaluate GETA on a variety of vision transformer architectures, including SimpleViT~\cite{xie2024jointsq}, ViT~\cite{alexey2020image}, DeiT~\cite{touvron2021training}, Swin Transformer~\cite{liu2021swin}, and Pyramid Vision Transformer~\cite{wang2022pvt}. These models are selected to further validate the architecture-agnostic nature of the \algacro{} framework. To highlight this capability, we focus on reporting the test accuracy and relative BOPs compared to the baseline models. The promising results, as shown in~\cref{tab:vit.architecture}, demonstrate the efficiency and versatility of \algacro{} across diverse transformer architectures.

\begin{table}[ht!]
\centering
\caption{Experiments on various vision transformer architectures.}
\label{tab:vit.architecture}
\resizebox{\linewidth}{!}{
\begin{tabular}
{l|l|cccc}
\Xhline{2pt}
Dataset & Model & Base Acc (\%) & Acc (\%) & Rel. BOPs (\%) \\
\Xhline{0.5pt}
Cifar10 & SimpleViT & 86.48 & \cellcolor{yellow!20} 86.06 & \cellcolor{yellow!20} 4.95 \\
\Xhline{0.5pt}
\multirow{4}{*}{ImageNet} & ViT-Small & 81.43 & \cellcolor{yellow!20} 80.12 & \cellcolor{yellow!20} 19.37 \\
& DeiT-Tiny & 72.01 & \cellcolor{yellow!20} 72.88 & \cellcolor{yellow!20} 16.95 \\
& Swin-Tiny & 80.92 & \cellcolor{yellow!20} 80.09 & \cellcolor{yellow!20} 21.84 \\
& PVTv2-B2 & 81.69 & \cellcolor{yellow!20} 80.53 & \cellcolor{yellow!20} 17.39 \\
\Xhline{2pt}
\end{tabular}}
\vspace{-4.5mm}
\end{table}

\subsection{Ablation Study}~\label{sec:ablation}
Our proposed \optname{} consists of four distinct stages: warm-up stage, projection stage, joint stage, and cool-down stage. To evaluate the contribution of each stage, we conduct an ablation study on two benchmarks, ResNet56 trained from scratch on CIFAR10 and Phi2 fine-tuned from a pre-trained model on the Common-Sense task. The results demonstrate that each stage positively contributes to the model's performance, as measured by test accuracy. As shown in~\cref{subfig:ablation.four.stage}, removing any of the four stages, especially the joint stage and cool-down stage, results in a noticeable decline in test accuracy. The significance of the joint stage and cool-down stage stems from the fact that a significant knowledge transfer is conducted to retain the information lost when applying pruning and quantization.

Moreover, each stage's contribution varies over downstream applications. For instance, the joint stage plays a more critical role when fine-tuning a pre-trained model compared to training from scratch. This can be attributed to the fact that pre-trained models inherently possess a wealth of useful knowledge, and the joint stage helps preserve performance by effectively transferring this knowledge under quantization constraints.


In addition, we perform an ablation study (See~\cref{subfig:ablation.compression.limit}) using ResNet56 on CIFAR10 benchmark to study the limit of each compression technique within GETA framework. As highlighted in~\cite{he2023structured}, structured pruning methods typically achieve sparsity greater than 80\%. However, under joint setup, accuracy begins to degrade significantly beyond 60\% sparsity. This suggests quantization error constrains aggressive pruning, lowering the achievable sparsity threshold from 80\% to 60\% for ResNet56-CIFAR10. For quantization, satisfactory accuracy is typically retained with bit width $\geq 2$bits when sparsity $\leq 60\%$. When sparsity exceeds $60\%$, model becomes less tolerant to lower bit width, requiring at least 4-bit to retain performance.

\section{Conclusion}
We proposed \algacro{}, an automatic framework designed to jointly apply structured pruning and quantization-aware training to deep neural networks, addressing key limitations of existing methods. By leveraging quantization-aware dependency graph analysis, GETA enables structured pruning and quantization-aware training across a wide range of architectures, including both CNNs and transformers. The proposed QASSO optimizer provides explicit control over bit width and sparsity, resolving black-box issues prevalent in existing approaches. With merits such as improved generalization, white-box optimization, and a one-shot framework, GETA offers an easy-to-use and user-friendly solution for practical deployment. In the future, it will be interesting to explore adapting GETA for specialized hardware to improve real-world deployment on different platforms. 

\newpage
{\small
\bibliographystyle{ieee_fullname}
\bibliography{egbib}
}

\newpage
\appendix
\onecolumn
\section{Proof for~\cref{prop:descent.direction}}~\label{appendix:proof}
In this section, we present the proof for~\cref{prop:descent.direction}. For convenience, we restate the proposition as follows.
\begin{proposition}
Let $\hat{\nabla}_x f$ be the full gradient of function $f(x,d,q_m,t)$ with respect to $x$. With forget rate $\gamma$ selection rule~\cref{eq:forget.rate.rule} and quantization step size $d$ selection rule~\cref{eq:quant.step.size.rule}, the search direction $s(x)$ is a descent direction for the function $f$ with respect to $x$ at $x$.
\end{proposition}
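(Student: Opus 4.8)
The plan is to establish $\langle \nabla_x f(x,d,q_m,t), s(x)\rangle < 0$ by splitting the inner product along the partition $\Gcal = \Gcal_I \cup \Gcal_R$. Over the important groups, $s(x)$ in~\cref{eq:search.direction} equals $-\alpha [\hat{\nabla}_x f]_{\Gcal_I}$ with $\alpha > 0$, so that block contributes exactly $-\alpha\|[\hat{\nabla}_x f]_{\Gcal_I}\|^2 \le 0$ and needs nothing further; all the work lies in the redundant block, where $[s(x)]_{\Gcal_R} = -\alpha[\hat{\nabla}_x f]_{\Gcal_R} - \gamma [x^Q]_{\Gcal_R}$.

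For the redundant block I would substitute the decomposition $x^Q = \text{sgn}(x)\,\text{clip}_{q_m}^t(|x|) + d\,\text{sgn}(x)\,R(x)$ from~\cref{eq:quantized.value}, and use the definitions of $\theta_\gamma$ and $\theta_d$ (together with the fact that the cosine of the angle between $-a$ and $-b$ equals that between $a$ and $b$) to write
\begin{align*}
\langle [\hat{\nabla}_x f]_{\Gcal_R}, [s(x)]_{\Gcal_R}\rangle
&= -\alpha\|[\hat{\nabla}_x f]_{\Gcal_R}\|^2 - \gamma\cos(\theta_\gamma)\,\|[\hat{\nabla}_x f]_{\Gcal_R}\|\,\|[\text{sgn}(x)\,\text{clip}_{q_m}^t(|x|)]_{\Gcal_R}\| \\
&\quad - \gamma d\cos(\theta_d)\,\|[\hat{\nabla}_x f]_{\Gcal_R}\|\,\|[\text{sgn}(x)\,R(x)]_{\Gcal_R}\|.
\end{align*}
The crux is that each branch of the rules~\cref{eq:forget.rate.rule,eq:quant.step.size.rule} is exactly the value that collapses the associated term to a controllable multiple of $\alpha\|[\hat{\nabla}_x f]_{\Gcal_R}\|^2$. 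If $clip \le \epsilon$, then $\gamma = 0$ and the contribution is $-\alpha\|[\hat{\nabla}_x f]_{\Gcal_R}\|^2$. If $clip > \epsilon$ and $\cos(\theta_\gamma)\ge 0$, then $\gamma = 1/(K_p-k) > 0$, so the middle term is $\le 0$; the last term is $\le 0$ when $\cos(\theta_d)\ge 0$ (where $d>0$), and when $\cos(\theta_d)<0$ the prescribed $d$ makes it equal to $\xi\eta\,\alpha\|[\hat{\nabla}_x f]_{\Gcal_R}\|^2$, giving an upper bound $-(1-\xi\eta)\alpha\|[\hat{\nabla}_x f]_{\Gcal_R}\|^2 < 0$ since $\xi,\eta\in(0,1)$. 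If $clip>\epsilon$ and $\cos(\theta_\gamma)<0$, the chosen $\gamma$ turns the middle term into $(1-\eta)\alpha\|[\hat{\nabla}_x f]_{\Gcal_R}\|^2$, and combining with the two options for the last term gives the bounds $-\eta\,\alpha\|[\hat{\nabla}_x f]_{\Gcal_R}\|^2$ or $-\eta(1-\xi)\alpha\|[\hat{\nabla}_x f]_{\Gcal_R}\|^2$, both strictly negative. Summing the $\Gcal_I$ and $\Gcal_R$ contributions yields $\langle \nabla_x f, s(x)\rangle \le -c\,\alpha\|\nabla_x f\|^2$ for a constant $c\in(0,1]$ set by the active branch, so $s(x)$ is a descent direction at the current iterate whenever $\nabla_x f\neq 0$.

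The main obstacle — essentially the only delicate point — is the bookkeeping that makes these substitutions legitimate: one must check that each branch of~\cref{eq:forget.rate.rule,eq:quant.step.size.rule} is well-posed and sign-consistent, i.e.\ that $\gamma>0$ and $d>0$ hold precisely on the branches whose denominators $\cos(\theta_\gamma)$ or $\cos(\theta_d)$ are negative, so that no denominator vanishes and no sign is flipped, and that the degenerate cases where $[\text{sgn}(x)\,\text{clip}_{q_m}^t(|x|)]_{\Gcal_R}$ or $[\text{sgn}(x)\,R(x)]_{\Gcal_R}$ vanishes — or where $\gamma=0$, in which case the $d$-update does not affect $s(x)$ — are subsumed by the $clip\le\epsilon$ branch or dealt with by the safeguard described in the appendix. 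Once these cases are dispatched, reducing each term to the stated multiple of $\alpha\|[\hat{\nabla}_x f]_{\Gcal_R}\|^2$ is elementary algebra and the result follows.
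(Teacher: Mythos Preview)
Your proof is correct and follows essentially the same route as the paper: split over $\Gcal_I$ and $\Gcal_R$, decompose $x^Q$ via~\cref{eq:quantized.value}, express the inner product with $[\nabla_x f]_{\Gcal_R}$ through $\cos(\theta_\gamma)$ and $\cos(\theta_d)$, and then run the branch-by-branch case analysis against the selection rules~\cref{eq:forget.rate.rule,eq:quant.step.size.rule} to bound each piece by a negative multiple of $\alpha\|[\nabla_x f]_{\Gcal_R}\|^2$. The only cosmetic difference is that the paper first performs an orthogonal decomposition of $[s_{\text{clip}}(x)]_g$ into components parallel and perpendicular to $[\nabla_x f]_g$ before arriving at exactly the cosine expression you write down directly; your version is slightly more economical, and the paper also works per group $g\in\Gcal_R$ rather than over the aggregated block, but neither distinction affects the argument.
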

\begin{proof}
Denote the full gradient of function $f(x,d,q_m,t)$ with respect to $x$ as $\nabla_x f$. The search direction $s(x)$ is rewritten as
\begin{align}~\label{eq:search.direction.repear}
s({x}) = 
\begin{cases}
- \alpha [\nabla_x f]_g,& g \in \Gcal_I, \\
- \alpha [\nabla_x f]_g - \gamma [{x}^Q]_g,& g \in \Gcal_R.
\end{cases}
\end{align} 
Since $-\alpha[\nabla_x f]_g^T [\nabla_x f]_g < 0$ for $g \in \Gcal_I$, it suffices to show that for $g \in \Gcal_R$,
$$
[\nabla_x f]_g ^T \left[- \alpha [\nabla_x f]_g - \gamma [x^Q]_g\right] < 0.
$$ 
It follows from ~\eqref{eq:quantized.value} that for $g \in \Gcal_R$,
\begin{align*}
-\alpha [\nabla_x f]_g - \gamma [x^Q]_g = \underbrace{- \alpha [\nabla_x f]_g - \gamma  [\text{sgn}(x) \cdot \text{clip}_{q_m}^t(|x|)]_g}_{[s_{\text{clip}}(x)]_g} - \gamma \cdot d [\text{sgn}(x) \cdot R(x)]_g.
\end{align*}
Denote the angle between $-[\nabla_x f]_g$ and $-[\text{sgn}(x)\cdot \text{clip}_{q_m}^t(|x|)]_g$ as $\theta_{\gamma}$. It follows that the vector $[s_{\text{clip}}(x)]_g$ can be decomposed into two orthogonal vectors, i.e.,
$$
[s_{\text{clip}}(x)]_g = [\shat_{\text{clip}}(x)]_g + [\stilde_{\text{clip}}(x)]_g,
$$
where $[\shat_{\text{clip}}(x)]_g$ is orthogonal to vector $[\nabla_x f]_g$ and $[\stilde_{\text{clip}}(x)]_g$ is parallel to vector $[\nabla_x f]_g$. Since $[\shat_{\text{clip}}(x)]_g^T [\nabla_x f]_g = 0$, we have that
$$\|[\shat_{\text{clip}}(x)]_g\| = \gamma \sin\theta_{\gamma}\|[\text{sgn}(x)\cdot \text{clip}_{q_m}^t(|x|)]_g\|.
$$
Using the orthogonality between vector $[\shat_{\text{clip}}(x)]_g$ and vector $[\stilde_{\text{clip}}(x)]_g$, we have that
\begin{align*}
\|[\stilde_{\text{clip}}(x)]_g\|^2 & = \|[s_{\text{clip}}(x)]_g\|^2 - \|[\shat_{\text{clip}}(x)]_g\|^2 \\
& = \|-\alpha [\nabla_x f]_g - \gamma [\text{sgn}(x)\cdot \text{clip}_{q_m}^t(|x|)]_g\|^2 - \gamma^2 \sin^2\theta_{\gamma} \|[\text{sgn}(x)\cdot \text{clip}_{q_m}^t(|x|)]_g\|^2 \\
& = \alpha^2 \|[\nabla_x f]_g\|^2 + 2\alpha \gamma [\nabla f(x)]_g^T [\text{sgn}(x)\cdot \text{clip}_{q_m}^t(|x|)]_g + \gamma^2 \cos^2\theta_{\gamma} \|[\text{sgn}(x)\cdot \text{clip}_{q_m}^t(|x|)]_g\|^2 \\
& = \left[\alpha \|[\nabla_x f]_g\| + \gamma \cos\theta_{\gamma} \|[\text{sgn}(x)\cdot \text{clip}_{q_m}^t(|x|)]_g\|\right]^2. 
\end{align*}
Given the norm and direction of the vector $[\stilde_{\text{clip}}(x)]_g$, we have $[\stilde_{\text{clip}}(x)]_g$ expressed as, for $g \in \Gcal_R$,
\begin{equation}~\label{eq:clip.value.expression}
[\stilde_{\text{clip}}(x)]_g = -\frac{\alpha \|[\nabla_x f]_g\| + \gamma \cos(\theta_{\gamma})\|[\text{sgn}(x)\cdot \text{clip}_{q_m}^t(|x|)]_g\|}{\|[\nabla_x f]_g\|} [\nabla_x f]_g.
\end{equation}
Combining the forget rate selection rule~\eqref{eq:forget.rate.rule} and the expression~\eqref{eq:clip.value.expression} allows us to have that for $g \in \Gcal_R$,  
\begin{equation}~\label{eq:gamma.descent}
\begin{aligned}
[\nabla_x f]_g^T [s_{\text{clip}}(x)]_g & = [\nabla_x f]_g^T \left[[\shat_{\text{clip}}(x)]_g + [\stilde_{\text{clip}}(x)]_g\right]\\
& = [\nabla_x f]_g^T [\stilde_{\text{clip}}(x)]_g \\
& = -\alpha \|[\nabla_x f]_g\|^2 - \gamma \cos(\theta_{\gamma}) \|[\nabla_x f]_g\| \|[\text{sgn}(x) \cdot [\text{clip}_{q_m}^{t}(|x|) ]_g ]\| \\
& < -\eta \alpha \|[\nabla_x f]_g\|^2.
\end{aligned}
\end{equation}
Further, our quantization step size $d$ selection rule~\eqref{eq:quant.step.size.rule} guarantees that
\begin{equation}~\label{eq:d.descent}
-\eta \alpha \|[\nabla_x f]_g\|^2 - \gamma d [\nabla_x f]_g^T [\text{sgn}(x) \cdot R(x)]_g < 0.
\end{equation}
Combining~\cref{eq:gamma.descent} and~\cref{eq:d.descent} allows us to have that
\begin{align*}
    [\nabla_x f]_g^T \left[-[\nabla_x f]_g - \gamma [x^Q]_g\right] & = [\nabla_x f]_g^T \left[[s_{\text{clip}}(x)]_g - \gamma \cdot d [\text{sgn}(x) \cdot R(x)]_g\right] \\
    & < -\eta \alpha \|[\nabla_x f]_g\|^2 - \gamma d [\nabla_x f]_g^T [\text{sgn}(x) \cdot R(x)]_g < 0,
\end{align*}
which completes the proof.
\end{proof}

\section{Joint Stage Implementation Details}~\label{appendix:safeguard}
The update rule in~\cref{eq:quant.step.size.rule} alone is insufficient to ensure that the bit width constraint in~\cref{constr:projection.bit.width} is consistently satisfied. To address this issue, we introduce an algorithm, outlined in~\cref{alg:algorithm.safeguard}, to adaptively adjust the forget rate $\gamma$ and quantization step size $d$ such that the computed bit width stays within the target bit width range $[b_l, b_u]$. Meanwhile, with the adaptive algorithm in place, the search direction $s(x)$ continues to be a descent direction when stochastic gradient $\hat{\nabla}_x f$ is assumed to be full gradient, as demonstrated in~\cref{prop:descent.direction.safeguard}. In addition, there are three hyperparameters that appear in~\cref{eq:forget.rate.rule} and~\cref{eq:quant.step.size.rule} and they are selected as $\eta=0.9$, $\xi=0.999$, and $\epsilon=$1e-8.

\begin{proposition}~\label{prop:descent.direction.safeguard}
Let $\hat{\nabla}_x f$ be the full gradient of function $f(x,d,q_m,t)$ with respect to $x$. With the~\cref{alg:algorithm.safeguard} in place (applied immediately after~\cref{line:d.update} in~\cref{alg:algorithm.quantization_aware_dep_graph}), the search direction $s(x)$ is still a descent direction with respect to function $f$ at the point $x$.
\end{proposition}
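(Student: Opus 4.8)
The plan is to reduce the claim to the two key inequalities already isolated in the proof of~\cref{prop:descent.direction} and to show that neither is destroyed by the adjustments carried out in~\cref{alg:algorithm.safeguard}. As in~\cref{prop:descent.direction}, I would work under the stated hypothesis $\hat{\nabla}_x f = \nabla_x f$, so that $s(x)$ from~\eqref{eq:search.direction} coincides with the direction analyzed there. For $g\in\Gcal_I$ the direction $-\alpha[\nabla_x f]_g$ is untouched by the safeguard and is trivially a descent direction, so everything reduces to the groups $g\in\Gcal_R$, where, via~\eqref{eq:quantized.value}, the direction splits into a ``clip'' part $[s_{\text{clip}}(x)]_g$ and a ``residual'' part $-\gamma d\,[\text{sgn}(x)\cdot R(x)]_g$; the proof of~\cref{prop:descent.direction} controls the first through~\eqref{eq:gamma.descent} and then the combination through~\eqref{eq:d.descent}.

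First I would record a monotonicity fact for the clip part: writing $[\nabla_x f]_g^T[s_{\text{clip}}(x)]_g = -\alpha\|[\nabla_x f]_g\|^2 - \gamma\cos(\theta_\gamma)\,\|[\nabla_x f]_g\|\,\|[\text{sgn}(x)\cdot\text{clip}_{q_m}^t(|x|)]_g\|$, this quantity is nonincreasing in $\gamma$ when $\cos(\theta_\gamma)\ge 0$ and nondecreasing in $\gamma$ when $\cos(\theta_\gamma)<0$. Hence the estimate~\eqref{eq:gamma.descent} persists whenever the safeguard either leaves $\gamma$ unchanged, or only shrinks $\gamma$ in the $\cos(\theta_\gamma)<0$ branch, or sets $\gamma=0$ --- in which last case $[s(x)]_g = -\alpha[\nabla_x f]_g$ and descent for that group is immediate. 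So the first concrete task is to delimit the set of values of $\gamma$ that~\cref{alg:algorithm.safeguard} is permitted to output and to verify it lies inside the regime in which~\eqref{eq:gamma.descent} still holds.

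Next I would handle the residual part. Since each entry of $R(x)$ is a rounding remainder lying in $[-\tfrac{1}{2},\tfrac{1}{2}]$, the vector $[\text{sgn}(x)\cdot R(x)]_g$ has bounded norm, and in the relevant $\cos(\theta_d)<0$ case the offending contribution equals $\gamma d\,|\cos(\theta_d)|\,\|[\nabla_x f]_g\|\,\|[\text{sgn}(x)\cdot R(x)]_g\|$, which depends on $\gamma$ and $d$ only through the product $\gamma d$ and is increasing in it. The step-size rule~\eqref{eq:quant.step.size.rule} is calibrated so that, at the prescribed $(\gamma,d)$, the sum in~\eqref{eq:d.descent} is negative with a strictly positive margin proportional to $(1-\xi)$. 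I would then case-split on the branches of~\eqref{eq:forget.rate.rule} and~\eqref{eq:quant.step.size.rule} together with the sign of $\cos(\theta_d)$, and show that the adjustment~\cref{alg:algorithm.safeguard} makes to enforce $b_i\in[b_l,b_u]$ either (i) moves $d$ in the direction that shrinks this contribution (for instance a projection of $d$ onto the admissible interval that decreases $d$, or that increases $d$ only in the $\cos(\theta_d)\ge 0$ branch where the residual term is already nonpositive), or (ii) rescales $(\gamma,d)$ jointly so that $\gamma d$, and hence the residual contribution, stays below what the $(1-\xi)$ margin can absorb while $\gamma$ stays in the admissible range from the previous paragraph. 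In each branch the verification is then a short inequality chain parallel to~\eqref{eq:gamma.descent} and~\eqref{eq:d.descent}.

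I expect the main obstacle to be precisely this last step. The purpose of the safeguard is to drag the computed bit width into $[b_l,b_u]$, and by~\eqref{eq:bit.width.formula} the only lever that changes the bit width is $d$, so the proof cannot sidestep the scenario in which the safeguard is forced to \emph{increase} $d$ while $\cos(\theta_d)<0$ --- exactly the configuration in which the residual term is positive and grows with $d$, and in which the available margin $(1-\xi)\eta\alpha\|[\nabla_x f]_g\|^2$ is small. Closing the argument will hinge on the specific design of~\cref{alg:algorithm.safeguard}: the joint, rather than independent, adjustment of $\gamma$ and $d$; the $\gamma=0$ hard-projection fallback; and the slack $\xi<1$. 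The crux of the write-up will be to exhibit, for that branch, the explicit upper bound on the admissible $d$ --- equivalently on $\gamma d$ --- under which the residual contribution remains below that margin, and to check that the rule enforced by~\cref{alg:algorithm.safeguard} never violates it.
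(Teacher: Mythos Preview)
Your plan is correct and matches the paper's own argument: reduce to $g\in\Gcal_R$, split into clip and residual parts, and verify that the safeguard's output $(\gamma,d)$ stays in the admissible ranges derived from~\eqref{eq:gamma.descent} and~\eqref{eq:d.descent}. The one thing you could not see is the exact form of~\cref{alg:algorithm.safeguard}, and it is simpler than you fear: when the bit width is too large the rule sets $\gamma\gets\beta\gamma$ and $d\gets d/\beta$ simultaneously, so the product $\gamma d$ is \emph{exactly invariant}; when the bit width is too small it leaves $\gamma$ fixed and sets $d\gets\beta d$, so $\gamma d$ strictly decreases. Hence in the dangerous $\cos(\theta_d)<0$ branch the residual contribution never grows, and the $(1-\xi)$ slack you correctly isolated is never called upon to absorb an increase; likewise $\gamma$ can only shrink or stay fixed, so your monotonicity observation for the clip part closes that side immediately. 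There is no $\gamma=0$ fallback in the safeguard --- you can drop that speculation.
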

\begin{proof}
Denote the full gradient of function $f(x,d,q_m,t)$ with respect to $x$ as $\nabla_x f$. Let's first consider the following three simple cases. When $clip > \epsilon$, the forget rate is equal to 0 and therefore, $s(x)=-\nabla f(x)$, which is a descent direction with respect to function $f(x,d,q_m,t)$ with respect to $x$ at $x$. When $\cos(\theta_{\gamma}) \geq 0$ or $\cos(\theta_d) \geq 0$, $s(x)$ is guaranteed to be descent direction with respect to function $f(x,d,t,q_m)$ with respect to $x$ when $\gamma$ and $d$ are positive values. 

Now, it remains to consider the following two cases: $\cos(\theta_{\gamma}) < 0, clip > \epsilon$ and $\cos(\theta_d) < 0$. As indicated in~\cref{eq:forget.rate.rule} and~\cref{eq:quant.step.size.rule}, we have that $s(x)$ is a descent direction with respect to function $f(x,d,q_m,t)$ with respsect to $x$ if when $\cos(\theta_{\gamma}) < 0$ and $clip > \epsilon$, 
\begin{equation}~\label{eq:gamma.range}
\gamma \in (0, -\frac{\alpha \|[\nabla_x f]_g\|}{\cos(\theta_{\gamma}) \|[\text{sgn}(x) \cdot \text{clip}_{q_m}^{t}(|x|)]_g\|}),
\end{equation}
and when $\cos(\theta_d) < 0$,
\begin{equation}~\label{eq:d.range}
d \in (0, -\frac{\xi \eta \alpha \|[\nabla_x f]_g\|}{\gamma \cos(\theta_d)\|[\text{sgn}(x) \cdot R(x)]_g\|}).
\end{equation}

When $\cos(\theta_{\gamma}) < 0$, we guarantee that with the~\cref{alg:algorithm.safeguard} in place, the forget rate $\gamma$ always lie in the range specified in~\cref{eq:gamma.range} since $\gamma$ either decreases by a factor of $\beta$~(see \cref{line:gamma.decrease}) or remains the same (see~\cref{line:gamma.same}). When $\cos(\theta_d)<0$, we consider two cases based on if the forget rate decreases. If forget rate decreases~(see \cref{line:gamma.decrease}), then the range given in~\cref{eq:quant.step.size.rule} is changed to
\begin{equation}~\label{eq:d.range.new}
(0, -\frac{\xi \eta \alpha \|[\nabla_x f]_g\|}{\beta \gamma \cos(\theta_d)\|[\text{sgn}(x) \cdot R(x)]_g\|})   
\end{equation}
It follows that increasing $d$ by a factor of $\frac{1}{\beta}$ guarantees that $d$ lies within the range~\cref{eq:d.range.new}. If forget rate remains the same, then $d$ always lie in the range~\cref{eq:d.range} since $d$ decreases by a factor of $\beta$. 
\end{proof}

\begin{algorithm}
\caption{Adaptive update rule for $\gamma$ and $d$.}
\label{alg:algorithm.safeguard}
\begin{algorithmic}[1]
\State \textbf{Inputs.} Variables: $\gamma$, $d$, bit width range: $[b_l, b_u]$, $\beta \in (0,1)$, fixed quantization variables: $q_m, t$.
\While{$\log_2 \left(\tfrac{(q_m)^t}{d} + 1\right) + 1 \notin [b_l,b_u]$}
\If{$\log_2 \left(\tfrac{(q_m)^t}{d} + 1\right) + 1 > b_u$}
\State $\gamma \gets \beta \gamma$, $d \gets d/\beta$.~\label{line:gamma.decrease}
\ElsIf{$\log_2 \left(\tfrac{(q_m)^t}{d} + 1\right) + 1 < b_l$}
\State $\gamma \gets \gamma$, $d \gets \beta d$.~\label{line:gamma.same}
\EndIf
\EndWhile
\State \textbf{Outputs.} $\gamma$, $d$. 
\end{algorithmic}   
\end{algorithm}

\section{Numerical Experiment Setup}~\label{appendix:numerical.experiment.setup}
First, we provide details on how we initialize quantization parameters. For each layer that contain quantization parameters, the exponential $t=1$ and the maximum of quantization range $q_m$ is set to the layerwise maximum of the weight tensor. For experiments on ResNet20, VGG7, and ResNet50, the quantization step size $d$ is chosen such that the resulting bit width is 32 bits while for Bert, the quantization step size $d$ is selected to achieve a bit width of 8 bits.

Next, we provide details on how we select the optimizer and the learning rate for different experiments. For ResNet20, we use the SGD optimizer and the initial learning rate is set to 1e-1 with StepLR learning rate scheduler. For experiments of VGG7, we use the optimizer ADAM and the learning rate is set to 1e-3 with StepLR learning rate scheduler. For ResNet50, we use the optimizer SGD and the learning rate is set to 1e-1 with StepLR learning rate scheduler. For Bert, we use AdamW with learning rate as constant 3e-5. For all four experiments, the learning rate for quantization parameters is set as constant 1e-4. For details on how we set hyperparameters related to projection stage and pruning stage, one can find them in~\cref{{tab:experiment.setup}}.
\begin{table}[h]
\caption{Experiment setup for all four experiments. In the following table, the unit for projection steps and pruning steps is the number of epochs. As for Bert, the experiment setups are same under all sparsity ratios (10\%, 30\%, 50\%, 70\%).}
\label{tab:experiment.setup}
\begin{tabular}{lcccccccc}
\Xhline{2pt}
Model & \makecell{Sparsity~\\ level} & \makecell{Total~\\ epochs} & \makecell{Projection~\\ periods $B$} & \makecell{Projection~\\ steps $K_b$} & \makecell{Pruning~\\ periods $P$} & \makecell{Pruning~\\ steps $K_p$} & \makecell{Bit width~\\ reduction $b_r$} & \makecell{Bit width~\\ range $[b_l, b_u]$} \\
\hline
ResNet20 & 0.35 & 350 & 7 & 35 & 5 & 30 & 2 & [4,16]\\
VGG7 & 0.7 & 200 & 5 & 20 & 10 & 30 & 2 & [4,16]  \\
ResNet50 & 0.4,0.5 & 120 & 5 & 5 & 10 & 10 & 2 & [4,16] \\
Bert & 0.1,0.3,0.5,0.7 & 10 & 4 & 1 & 6 & 6 & 2 & [4,16]\\
\Xhline{2pt}
\end{tabular}
\end{table}

\newpage
\section{Quantization-Aware Dependency Graph}~\label{appendix:dependency.graph}
For more intuitive illustration, we present quantization-aware dependency graphs of Bert1 (mini-Bert with one transformer block) and VGG7. Both the original and post-analysis versions of these graphs are shown. To enhance readability of the graph's finer details, we recommend zooming in to a scale of 1500\% or higher using Adobe PDF Reader.
\begin{figure}[ht!]
    \centering
    \includegraphics[width=0.52\linewidth]{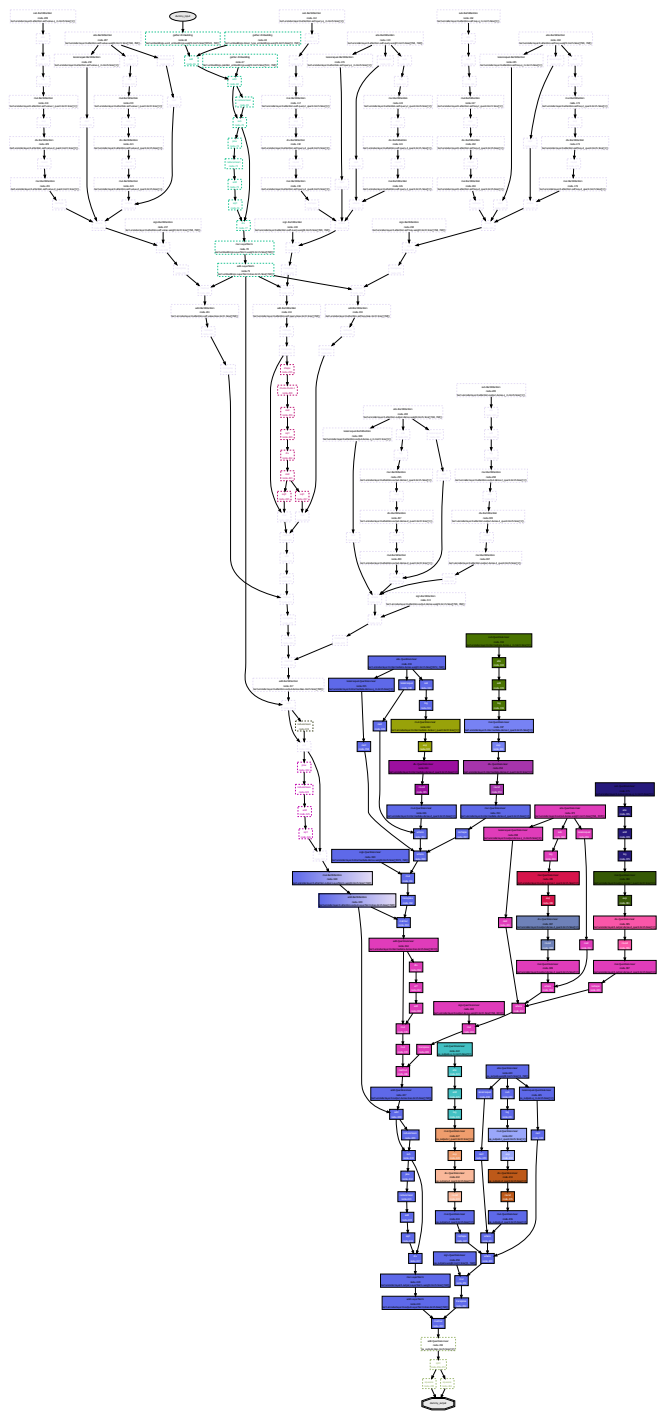}
    \caption{Bert1 before performing quantization-aware dependency graph analysis.}
    \label{fig:Bert.real.example.before}
\end{figure}

BERT.
\begin{figure}[htbp]
    \centering
    \includegraphics[width=0.27\linewidth]{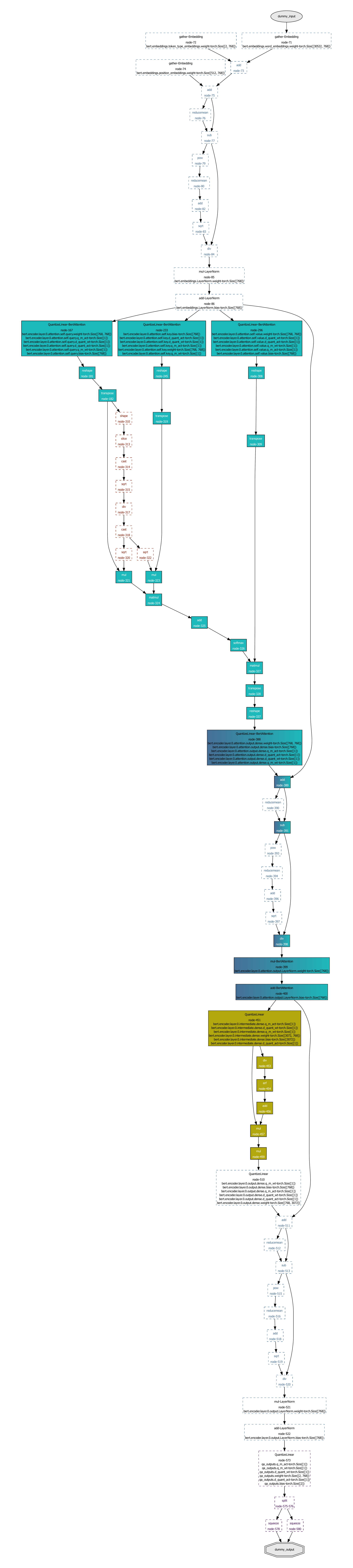}
    \caption{Bert1 after performing quantization-aware dependency graph analysis.}
    \label{fig:Bert.real.example.after}
\end{figure}

VGG7.
\begin{figure}[ht!]
    \centering
    \includegraphics[width=0.80\linewidth]{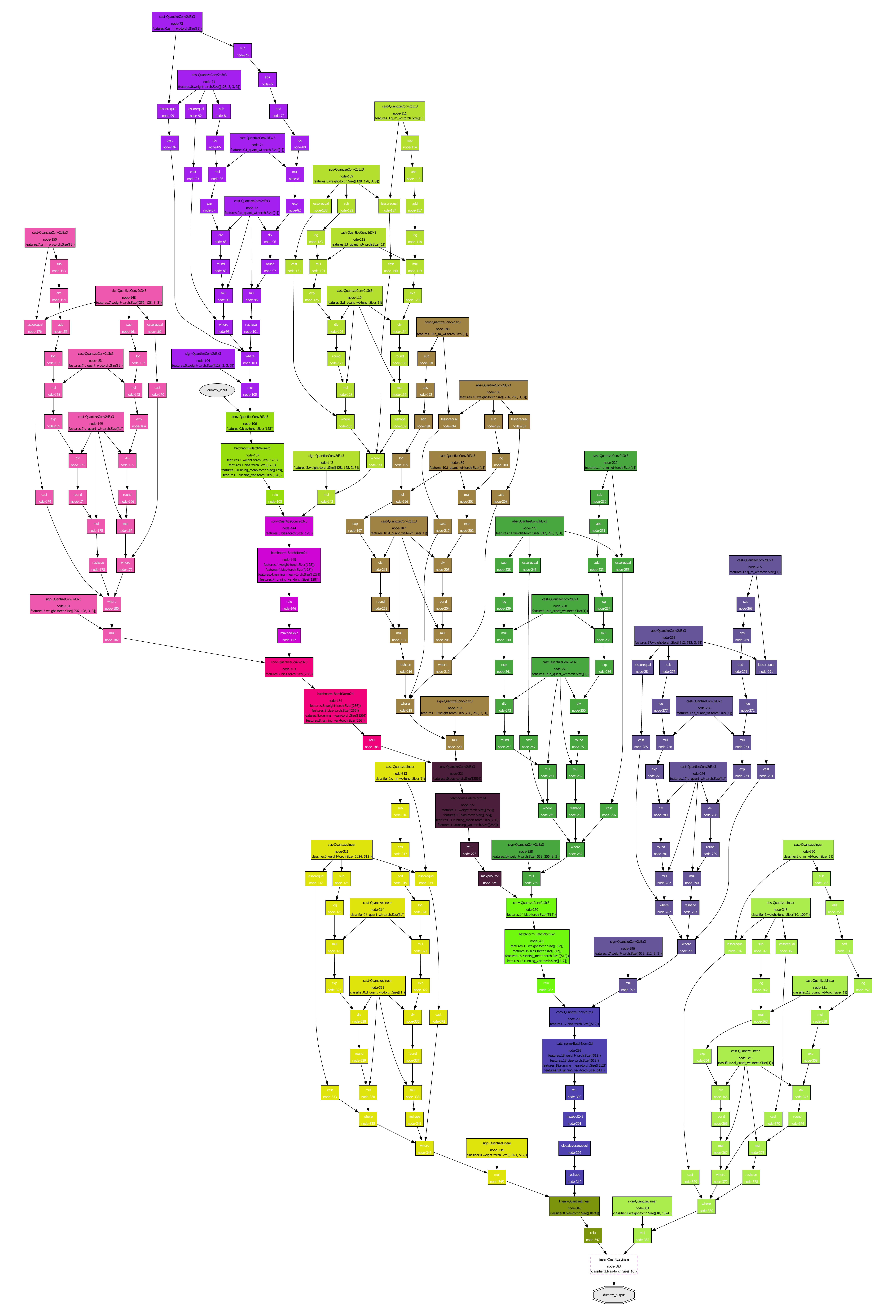}
    \caption{VGG7 after performing quantization-aware dependency graph analysis.}
    \label{fig:vgg7.real.example.before}
\end{figure}

VGG7.
\begin{figure}[ht!]
    \centering
    \includegraphics[width=0.1\linewidth]{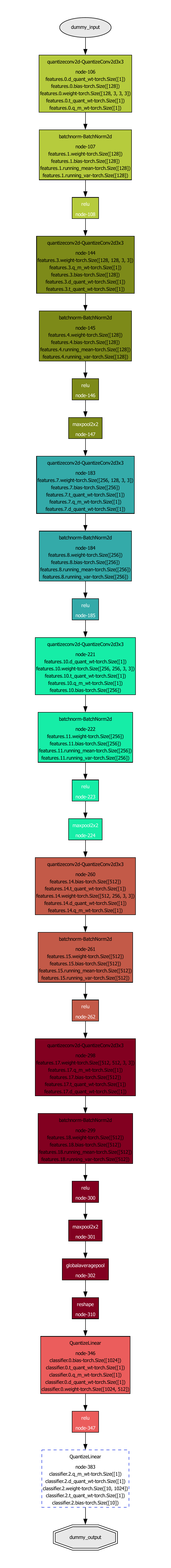}
    \caption{VGG7 after performing quantization-aware dependency graph analysis.}
    \label{fig:vgg7.real.example.after}
\end{figure}

\end{document}